\newcommand{\comment}[1]{}
\newtheorem{lemma}{Lemma}
\newtheorem{thm}{Theorem}
\newtheorem{prop}{Proposition}
\newtheorem{defn}{Definition}
\newtheorem{exmp}{Example}
\DeclareMathOperator*{\argmax}{arg\,max}
\title{Convex Combination Belief Propagation}
\author{Anna Grim and Pedro Felzenszwalb}
\begin{document}

\maketitle

\begin{abstract}
We present new message passing algorithms for performing inference with graphical models.  Our methods are designed for the most difficult inference problems where loopy belief propagation and other heuristics fail to converge.  Belief propagation is guaranteed to converge when the underlying graphical model is acyclic, but can fail to converge and is sensitive to initialization when the underlying graph has complex topology. This paper describes modifications to the standard belief propagation algorithms that lead to methods that converge to unique solutions on graphical models with arbitrary topology and potential functions.  
\end{abstract}

\section{Introduction}

Graphical models provide a natural framework for probabilistic modeling and statistical inference with a large number of random variables. In this setting, statistical inference involves computing either marginal distributions for the individual random variables, or joint configurations with maximum probability. Exact inference is NP-hard in both cases, so it is essential to develop approximation algorithms that make inference computationally tractable \cite{cooper90}. 

Belief propagation (BP) is a widely used message passing algorithm that can be used to perform either exact or approximate inference in graphical models. This algorithm was introduced by Judea Pearl in the early 1980s as a method for performing exact inference on acyclic graphs in polynomial time \cite{pearl83}.  For graphical models with cycles BP is a heuristic (often called \emph{loopy} BP) that can be used to perform approximate inference. This method has gained popularity within the artificial intelligence community since it obtains state of the art results in certain settings such as error correcting codes and image analysis (see, e.g., \cite{chen02}, \cite{fossorier99}, \cite{frey97}, \cite{murphy99}, \cite{sun03}). However, BP is also known for failing to converge and for being sensitive to initialization.
These issues are the main motivations behind this paper as well as many other works over the last few decades. 

The BP literature has primarily focused on two key problems: (1) understanding why loopy belief propagation fails to converge and (2) using this knowledge to develop alternative local message passing algorithms. Broadly speaking, whether the algorithm converges depends on the topology of the graph and the nature of the potential functions in the graphical model.

We present \emph{convex combination belief propagation} (CCBP), a convergent alternative to traditional belief propagation inspired by the work in \cite{felzenszwalb19}.  Our primary motivation for developing CCBP is that loopy belief propagation is known to fail to converge on graphs with complex topology. CCBP addresses this problem by weighting the incoming messages when computing an outgoing message.  This mitigates the effect of feedback loops and leads to globally convergent versions of both the sum-product and max-product BP algorithms.  

Felzenszwalb and Svaiter (2019) used a type of non-linear diffusion to obtain globally convergent methods for approximate inference in graphical models \cite{felzenszwalb19}.  The approach we take in this paper is closely related.  
CCBP involves fixed point iteration with contractive maps.  For a fixed set of valid weights the process converges to a unique solution regardless of the topology of the graph 
and message initialization. It is simple to determine weights that guarantee uniqueness of the fixed point and convergence of fixed point iteration.  The weights depend only on the local topology of the graph and do not
depend on the strength or nature of the potentials in the graphical model.  Moreover our weighting scheme can be readily incorporated into existing implementations of BP.  

Tatikonda and Jordan (2002) showed that BP converges when the model satisfies Dobrushin's condition (see \cite{georgii11}, \cite{tatikonda03}, \cite{tatikonda02}). Intuitively, this condition states that the coupling factor (induced by the potentials) between variables must be sufficiently small. Ihler et al.(2005) derived a similar condition by bounding message error in terms of the dynamic range of the potentials \cite{ihler05}. Heskes (2004) derived a sufficient condition for the uniqueness of BP fixed points by equivalently determining when the Bethe free energy is convex (see \cite{heskes02}, \cite{heskes04}, \cite{yedidia05}, \cite{watanabe09}). The resulting conditions state that both cycles and the strength of the potentials affect whether this free energy is convex. Martin and Lasgouttes (2012) derive a sufficient condition for local convergence in terms of the graph structure and the beliefs values at the fixed point \cite{martin12}. Their result provides insight into why BP is more likely to converge on sparser graphs. 

These works have influenced the development of alternative local message passing algorithms that improve convergence by damping the potentials and/or mitigating the effects of feedback loops caused by message passing on graphs with cycles. Yedida et al. (2000) developed \emph{generalized} belief propagation in which the number of cycles is reduced by formulating message passing between regions of nodes \cite{yedidia00}. One drawback is that the performance is highly dependent upon how the graph is partitioned into regions, which can be a challenging task. Wainwright et al. (2003) developed \emph{tree-reweighted} belief propagation by maximizing a lower bound on the log partition function via convex combinations of tree-structured distributions \cite{wainwright03}. Kolmorogov (2006) introduced \emph{sequential} tree-reweighted belief propagation \cite{kolmogorov06} which improves the convergence of Wainwright's method by utilizing sequential (as opposed to parallel) updates. Although these alternatives have better convergence properties, they are more computationally demanding.

Roosta and Wainwright (2008) introduced a reweighted sum-product algorithm that incorporates edge-weights on the potentials and messages in each update \cite{roosta08}. The algorithm converges when the spectral radius of the update operator is bounded by one. However, it may be difficult to determine a set of weights that satisfies this condition. Knoll et al. (2018) introduced a homotopy continuation based approach called \emph{self-guided} belief propagation \cite{knoll18}. Their method interpolates between a pairwise model and a simplification of that model with only unary potentials. Although the method always returns a solution, it's only guaranteed to converge on the first time step.

The remaining of the paper is organized as follows. Section \ref{sec:background} provides a brief overview of graphical models and belief propagation, while also establishing basic notation. We introduce CCBP in Section \ref{sec:ccbp}, then prove several theoretical properties of the algorithm. Section \ref{sec:exps} discusses several numerical experiments that evaluate the performance of our algorithm. 

\section{Background}\label{sec:background}

\subsection{Probabilistic Graphical Models}

We consider pairwise undirected graphical models (Markov random fields).
Let $G=(V,E)$ be an undirected graph with the vertex set $V=\{1,\ldots,n\}$.  We use $N(i)=\{j \,:\, \{i,j\}\in E\}$ to denote the set of neighbors of node $i\in V$. Let $X=(X_1,\ldots,X_n)$ be a random vector where $X_i$ is a random variable with a set of possible outcomes $\Omega=\{1,\ldots,m\}$. A configuration of the random vector is given by $x=(x_1,\ldots,x_n)\in\Omega^n$.  
The probability of a configuration is given by the joint distribution:
\begin{equation*}
    \mathbb{P}(X=x)=\frac{1}{Z}\prod_{ i\in V}\phi_i(x_i)\prod_{\{i,j\}\in E}\psi_{ij}(x_i,x_j),
\end{equation*}
where $Z$ is a normalization constant. The functions $\phi_i$ and $\psi_{ij}$ are referred as \emph{potentials} and assumed to be positive.  The graphical model is said to be \emph{pairwise} since the joint distribution is a product potentials that depend on pairs of random variables.

Statistical inference is a central computational challenge in many applications. Exact inference is generally intractable for arbitrary distributions, especially when the joint distribution is defined over a large number of random variables. There are two inference tasks that frequently appear in applications: (1) finding the most probable (MAP) solution and (2) computing marginals.
\begin{enumerate}
    \item \emph{MAP Inference}. Determine the most probable state of the random vector,
    \begin{equation*}
    \hat x_\text{MAP}=\argmax_{x\in\Omega^n}\,\mathbb{P}(X=x).
    \end{equation*}
    \item \emph{Marginal Inference}. Compute the marginal distribution of each random variable $i\in V$,
    \begin{equation*}
    \mathbb{P}(X_i=\tau)= \sum_{\{x\,:\, x_i=\tau\}}\mathbb{P}(X=x).
    \end{equation*}
\end{enumerate}
There are two closely related belief propagation algorithms that can be used to address the two inference tasks. The max-product algorithm performs MAP inference, while the sum-product algorithm performs marginal inference. 

\subsection{Max-Product Algorithm}

The max-product algorithm finds the MAP solution by computing max-marginals for each node. Given any node $i\in V$, the max-marginal of this node is
\begin{equation*}
    p_i(\tau)=\max_{\{x\,:\, x_i=\tau\}}\mathbb P(X=x).
\end{equation*}
Note that this definition bears a close resemblance to the definition of a marginal distribution.  The value $p_i(\tau)$ specifies the maximum probability of a configuration where the state of the $i$-th random variable is $\tau$. As long as there are no ties, the exact MAP solution can be obtained by selecting
$\hat x_i=\arg\max\,\,p_i(\tau)$ for every node $i\in V$. 

The main idea behind the max-product BP algorithm is to use dynamic programming to break up the calculation of the max-marginals into subproblems. This results in a local message passing algorithm, where computing messages is equivalent to solving subproblems. Each message sent from a node to a neighboring node incorporates messages from the other neighbors, and information propagates throughout the graph as illustrated in Figure \ref{pass}.

\begin{figure}[ht]
\centering		
	\includegraphics[width=62.5mm]{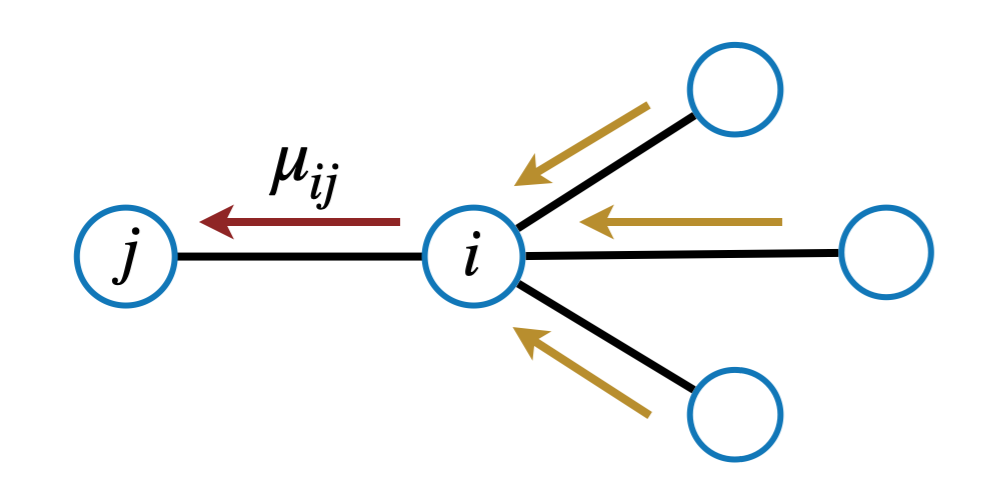}
	\caption{Illustration of message passing.  Node $i$ sends a message to a neighboring node $j$ that aggregates messages from all other neighbors of $i$.}
	\label{pass}
\end{figure}

Let $\mu_{ij}\in\mathbb{R}^m$ be the message sent from node $i$ to node $j$.  Each message is defined using the max-product equation,
\begin{equation*}
\mu_{ij}(x_j)= \max_{x_i}\big\{\,\phi_i(x_i)\,\psi_{ij}(x_i,x_j)\prod_{k\in N(i)\backslash j}\mu_{ki}(x_i)\, \big\}.
\end{equation*}
Intuitively, the message $\mu_{ij}$ provides information regarding what state node $i$ thinks node $j$ should be in, where large values of $\mu_{ij}(x_j)$ correspond to favorable states. The messages are often normalized for numerical reasons, in which case $\mu_{ij}$ is a distribution over the set of possible states. The incoming messages at each node are combined to compute a \emph{belief} function $b_j:\Omega\rightarrow\mathbb R_+$ given by
\begin{equation*}
b_{j}(x_j)=\phi_j(x_j)\prod_{i\in N(j)}\mu_{ij}(x_j).
\end{equation*}
For acyclic graphs the resulting beliefs are the exact max-marginals (up to a non-consequential scaling factor due to the normalization of messages), and the MAP solution can be obtained by maximizing each belief individually. 

When the graph contains cycles, this approach can be adapted into a fixed point iteration scheme referred to as \emph{loopy} belief propagation. In this setting, the message passing equations are used to define an operator that updates all messages in parallel until convergence (there are sequential variants as well). This operator acts on the space
\begin{equation*}
    \mathcal K=\bigotimes_{i\in V}\bigotimes_{j\in N(i)}\Delta_+^m,
\end{equation*}
where $\Delta^m_+$ is the positive $m$-simplex. Here $\mu \in \mathcal K$ is a vector of messages.  

\begin{defn}\label{T}
The operator $T:\mathcal K \rightarrow \mathcal K$ in the max-product algorithm is $T=N\hat{T}$.  $\hat{T}$ computes new messages using the max-product equation and $N$ normalizes each message.
\begin{equation*}
\big(\hat{T}\mu\big)_{ij}(x_j)=\max_{x_i}\big\{\,\phi_i(x_i)\,\psi_{ij}(x_i,x_j)\prod_{k\in N(i)\backslash j}\mu_{ki}(x_i)\, \big\}
\end{equation*}
\begin{equation*}
\big(N\mu\big)_{ij}(x_j) = \frac{\mu_{ij}(x_j)}{\sum\limits_\tau \mu_{ij}(\tau)}.
\end{equation*}
\end{defn}

The messages are initialized as $\mu^{(0)}\in\mathcal K$ and repeatedly updated via the fixed point iteration scheme $\mu^{(n+1)}=T\mu^{(n)}$.
This scheme is either run for a large number of iterations or stopped once the messages have sufficiently converged. The resulting messages $\mu^{(n)}$ are then used to compute beliefs $b_j^{(n)}$ for every node. 

\begin{defn}\label{def:blf}
The belief $b^{(n)}_j:\Omega\rightarrow\mathbb{R}$ of node $j\in V$ after $n$ iterations is given by
\begin{equation*}
    b^{(n)}_j(x_j)=\phi_j(x_j)\,\prod_{i\in N(j)}\mu^{(n)}_{ij}(x_j).
\end{equation*}
\end{defn}

BP has been successfully used in a variety of applications that involves cyclic graphs and often leads to good results. By treating the final beliefs as approximations to max-marginals we can obtain a labelling by maximizing each belief individually. However, BP is also known to fail to converge, have multiple fixed points, and for being sensitive to the message initialization.

One numerical method that improves the performance of BP is to stabilize the fixed point iteration scheme with damping\footnote{This scheme is referred to as the Krasnoselskij iteration scheme in the numerical analysis literature.}. This involves updating messages with the convex combination:
\begin{equation*}
    \mu^{(n+1)}:=(1-\alpha)\,T\mu^{(n)}+\alpha\,\mu^{(n)}
\end{equation*}
where $\alpha\in(0,1)$ is referred as the damping factor. In practice, damped BP (i.e. BP with damping) often prevents the messages from oscillating and converges faster than non-damped BP when both algorithms converge. However, even damped BP is still not guaranteed to converge.  We provide an example where damped BP does not converge below (Example \ref{ex:accuracy_ms}).

\subsection{Sum-Product Algorithm}

The sum-product BP algorithm is a variation of the max-product BP algorithm that can be used to compute marginal distributions.
The sum-product algorithm involves fixed point iteration with an operator that is nearly the same as in the max-product case. The only difference is that this operator includes a ``sum'' instead of a ``max'' in the message update equations.

\begin{defn}
The operator $T:\mathcal K\rightarrow\mathcal K$ in the sum-product algorithm is $T=N\hat T$.   $\hat T$ computes new messages using the sum-product equation and $N$ normalizes messages to sum to one.
\begin{equation*}
\big(\hat T\mu\big)_{ij}(x_j)=\sum_{x_i}\phi_i(x_i)\,\psi_{ij}(x_i,x_j)\prod_{k\in N(i)\backslash j}\mu_{ki}(x_i).
\end{equation*}
\begin{equation*}
\big(N\mu\big)_{ij}(x_j) = \frac{\mu_{ij}(x_j)}{\sum\limits_\tau \mu_{ij}(\tau)}.
\end{equation*}
\end{defn}

As in the max-product case messages are initialized to
arbitrary values and 
and repeatedly updated via fixed point iteration. 
After convergence the messages are used to compute beliefs for every node.  
In the sum-product algorithm the beliefs provide an approximation to the marginal distributions of each random variable. 

Similar to the max-product algorithm the sum-product algorithm performs exact inference in polynomial time when the graph is acyclic.  For cyclic graphs the sum-product algorithm is a heuristic.  The sum-product algorithm (even with damping) suffers from the exact same issues as the max-product algorithm.  Namely, the algorithm can fail to converge and can return different results depending on the initial set of messages.


\section{Convex Combination Belief Propagation}\label{sec:ccbp}

Now present \emph{convex combination belief propagation} (CCBP), a globally convergent alternative to belief propagation. The main objectives of this section are to introduce the new message passing operator, then prove that CCBP converges to a unique fixed point on graphs with arbitrary topology and arbitrary potential functions. This section focuses on the max-product version of CCBP, the sum-product version is analogous and described in Appendix B.

\subsection{Message Passing Operator}

Graphs with many short cycles are especially problematic for BP because they create feedback loops where information within messages is over-counted. As a result, the messages may oscillate, converge to inaccurate beliefs, or converge to different fixed points depending on the initialization. 

To define CCBP we take the operator from BP and weight the incoming messages when computing a new outgoing message.  When computing the message from node $i$ to node $j$ we weight the incoming message from another neighbor $k$ by $w_{ki}$.  The only conditions imposed upon these weights is that they are non-negative and sum to (at most) one.  We also discount all of the incoming messages by a factor $\gamma \in (0,1)$.

\begin{defn}\label{min_sum_op}
The operator $S:\mathcal{M}\rightarrow\mathcal{M}$ in the max-product CCBP algorithm is
\begin{equation*}
\big(S\mu\big)_{ij}(x_j)= \max_{x_i}\big\{\,\phi_i(x_i)\,\psi_{ij}(x_i,x_j) \Big(\prod_{k\in N(i)\backslash j}\mu_{ki}(x_i)^{w_{ki}} \Big)^\gamma\,\big\},
\end{equation*}
where the weights must be non-negative with $\sum\limits_{k\in N(i)\backslash j}w_{ki}\leq1$, and $\gamma\in(0,1)$. 
\end{defn}

Note that we do not incorporate normalization in the definition of the operator. The exclusion of the normalization factor implies that the operator acts on the space:
\begin{equation*}
    \mathcal M:=\bigotimes_{i\in V}\bigotimes_{j\in N(i)}\,\mathbb R^m_+
\end{equation*}

The simplest way to define each weight is to set them uniformily for each node based on the degree, $w_{ki}=1\slash\big(d(i)-1\big)$, where $d(i)$ is the degree of node $i$. Alternatively one can give more weight to some edges based on some additional information from a particular application (see \cite{felzenszwalb19}). Intuitively, the weights control how much influence neighboring nodes have upon each other. When the message sent from node $i$ to $j$ incorporates uniform weights, the other neighbors have equal influence upon node $j$. Non-uniform weights may be used to give some neighbors more influence. 

This operator also incorporates a damping factor $\gamma\in(0,1)$. Later we see that this term is the Lipschitz constant of the operator. Thus, the rate of convergence is dependent upon the magnitude of this parameter.  Next we present an example and numerical experiments that illustrates how the magnitude of $\gamma$ affects the performance of CCBP. 

\begin{exmp}\label{ex:gamma}
Let $G=(V,E)$ be an undirected graph with 10 nodes, where each pair of nodes is connected with probability 0.5. Let $\Omega =\{-1,1\}$ be the set of possible states for each random variable. Consider the joint distribution
\begin{equation*}
\mathbb P(X=x)=\frac{1}{Z}\exp\bigg(-\sum_{i\in V}x_iy_i-\sum_{\{i,j\}\in E}\lambda_{ij}x_i x_j\bigg)
\end{equation*}

We generated a concrete problem instance from this model by independently sampling $y_i$ from $\{-1,1\}$ and $\lambda_{ij}$ from a normal distribution. Then we repeatedly applied CCBP to this problem instance, while varying the magnitude of $\gamma$ from 0 to 0.9 in increments of $\Delta\gamma=0.1$. The weights $w_{ki}$ were set uniformly. The performance of the algorithm is determined by (1) computing the mean square error between the resulting beliefs after convergence and true max-marginals and (2) the number of iterations until convergence. 
\begin{figure}[H]
    \centering
    \includegraphics[width=155mm]{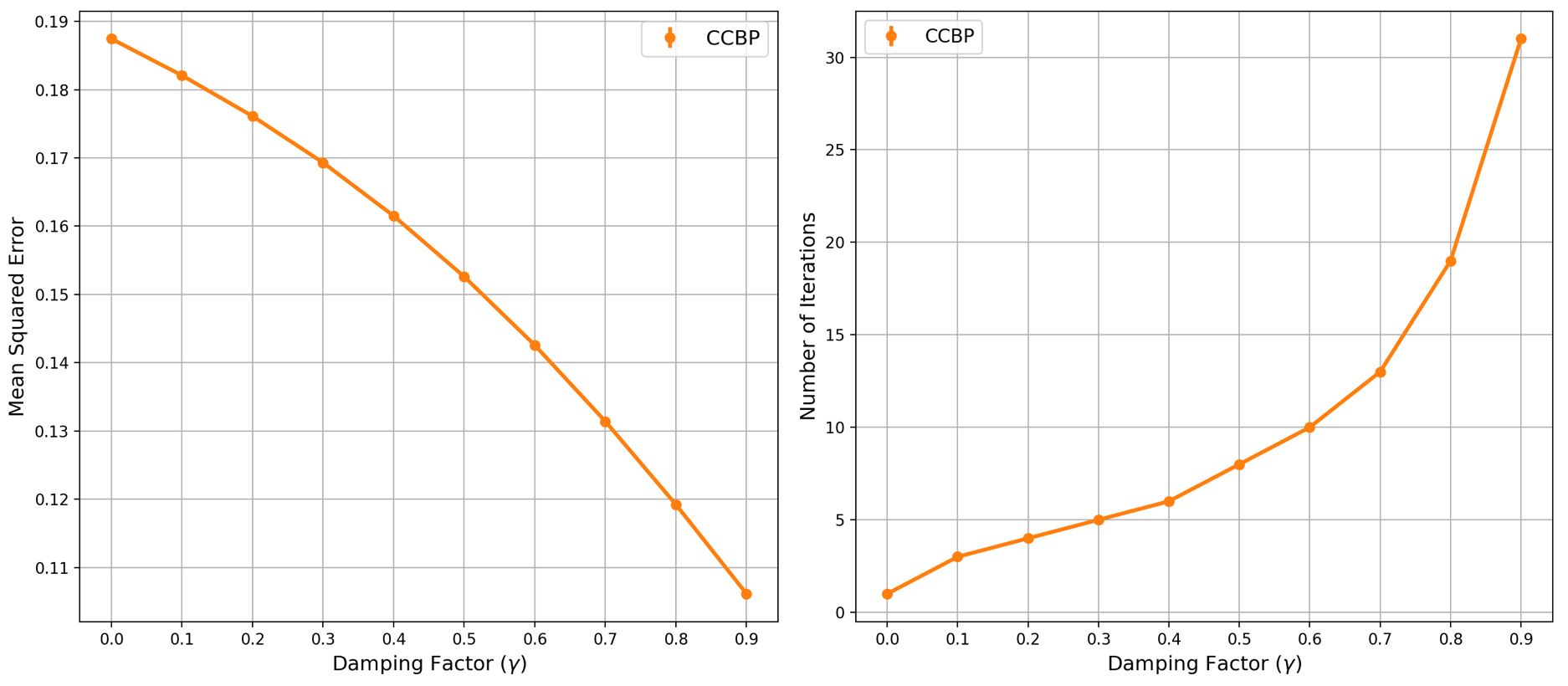}
    \caption{Results of experiment in Example \ref{ex:gamma}}
    \label{fig:gamma}
\end{figure}
\end{exmp}

The results from the numerical experiments with Example \ref{ex:gamma} are shown in Figure \ref{fig:gamma}. We see that the algorithm converges faster when the magnitude of $\gamma$ is smaller. However, the algorithm obtains a better approximation of the true max-marginals for larger values of $\gamma$. We recommend setting this factor close to one (i.e. $\gamma=0.9)$ since the algorithm  converges very quickly.  Note that the algorithm is guaranteed
to converge for any value $\gamma < 1$.

\subsection{Convergence}\label{sec:conv}

Next we prove that CCBP converges to a unique fixed point independent of the topology of the graph and message initialization. Our general approach is to consider CCBP as a discrete-time map, then use results from the theory of dynamical systems. In particular, we invoke Banach's fixed point theorem which guarantees existence and uniqueness of fixed points of certain discrete-time maps referred to as \emph{contractions}. Given a metric space $(X,d)$, an operator $F:X\rightarrow X$ is called a contraction if there is a constant $\gamma\in(0,1)$ such that
\begin{equation*}
d(F(x),F(y))\leq\gamma d(x,y)
\end{equation*}
for any $x,y\in X$. Banach's fixed point theorem states that a contractive operator (i.e. contraction) defined over a complete metric space has a unique and globally attractive fixed point $x^\star\in X$ \cite{granas03}. In order to use this result, we must define a metric which is complete with respect to the space $\mathcal M$.

\begin{prop}\label{complete}
Let $d:\mathcal M\times\mathcal M\rightarrow\mathbb R$ be the distance function given by
\begin{equation*}
d(\mu,\nu)=\max_{i\in V}\max_{j\in N(i)}\max_{x_j}\big\vert\log\,\mu_{ij}(x_j)-\log\,\nu_{ij}(x_j)\big\vert,
\end{equation*}
then the pair $\big(\mathcal M,d\big)$ is a complete metric space. 
\end{prop}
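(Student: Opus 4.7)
The plan is to show that $d$ is a metric and then transport the completeness of $\mathbb{R}^N$ under the $\ell^\infty$ norm to $(\mathcal{M},d)$ via a componentwise logarithm. Concretely, let $N$ be the total number of scalar message entries (i.e.\ $N = m \sum_{i \in V} |N(i)|$), and define $\Phi : \mathcal{M} \to \mathbb{R}^N$ by $\Phi(\mu)_{ij,x_j} = \log \mu_{ij}(x_j)$. This map is well-defined because every $\mu_{ij}(x_j)$ is strictly positive, and it is a bijection onto $\mathbb{R}^N$ with inverse given by componentwise exponentiation. By construction, $d(\mu,\nu) = \|\Phi(\mu) - \Phi(\nu)\|_\infty$, so $\Phi$ is an isometry between $(\mathcal{M},d)$ and $(\mathbb{R}^N, \|\cdot\|_\infty)$.

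First I would verify that $d$ is indeed a metric. Non-negativity and symmetry are immediate from the absolute value. For identity of indiscernibles, $d(\mu,\nu)=0$ forces $\log\mu_{ij}(x_j) = \log\nu_{ij}(x_j)$ for all $i,j,x_j$, and injectivity of $\log$ on $\mathbb{R}_+$ gives $\mu = \nu$. The triangle inequality follows by applying the scalar triangle inequality pointwise and then noting that $\max_{i,j,x_j}(a_{ij,x_j} + b_{ij,x_j}) \leq \max a_{ij,x_j} + \max b_{ij,x_j}$.

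For completeness, let $\{\mu^{(n)}\}$ be a Cauchy sequence in $(\mathcal{M}, d)$. Because $d$ dominates every coordinatewise quantity $|\log \mu^{(n)}_{ij}(x_j) - \log \mu^{(m)}_{ij}(x_j)|$, each of the finitely many scalar sequences $\{\log \mu^{(n)}_{ij}(x_j)\}_{n}$ is Cauchy in $\mathbb{R}$, hence converges to some $L_{ij}(x_j) \in \mathbb{R}$. Set $\mu^\star_{ij}(x_j) := \exp(L_{ij}(x_j)) \in (0,\infty)$, so $\mu^\star \in \mathcal{M}$. Given $\varepsilon > 0$, choose $n$ large enough that $|\log \mu^{(n)}_{ij}(x_j) - L_{ij}(x_j)| < \varepsilon$ for all of the finitely many triples $(i,j,x_j)$; taking the max over these triples gives $d(\mu^{(n)}, \mu^\star) < \varepsilon$, establishing $\mu^{(n)} \to \mu^\star$.

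I don't expect any genuine obstacles here since the argument is essentially just transporting the standard completeness of $(\mathbb{R}^N, \|\cdot\|_\infty)$ through the log bijection; the only things to be a little careful about are confirming that positivity of messages makes $\log$ well-defined (so $\Phi$ really maps into $\mathbb{R}^N$, not an extended real line), and that the limit $\mu^\star$ lies back in $\mathcal{M}$, which holds because $\exp$ takes every real number to a strictly positive real. The finiteness of the index set over $(i,j,x_j)$ is what lets the $\max$ in the definition of $d$ interact cleanly with coordinatewise convergence.
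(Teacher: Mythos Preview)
Your proposal is correct and follows essentially the same route as the paper. The paper first isolates an auxiliary lemma that $(\mathbb{R}_+,\,|\log\cdot-\log\cdot|)$ is a complete metric space and then applies it coordinatewise, whereas you package the same idea as an explicit isometry $\Phi:(\mathcal M,d)\to(\mathbb{R}^N,\|\cdot\|_\infty)$; the underlying mechanics---verify the metric axioms, pass to logs componentwise, use completeness of $\mathbb{R}$, and exponentiate back---are identical.
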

\begin{proof}
See Appendix A for proof.
\end{proof}

The metric $d$ computes the distance between individual messages $\mu,\nu\in\mathcal M$. 
Next we prove that $S$ is a contraction with respect to this metric. 

\begin{lemma}\label{contraction}
The operator $S$ is a contraction with Lipschitz constant $\gamma$.
\end{lemma}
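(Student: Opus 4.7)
The plan is to work in log-space, since the metric $d$ is defined via logarithms and the operator $S$ is a product of terms. Passing to logs linearizes the expression for $(S\mu)_{ij}(x_j)$, turning the product over neighbors into a weighted sum, so the contraction will come directly from the constraint $\sum_k w_{ki} \le 1$ together with the damping factor $\gamma$.

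Concretely, I would first fix an edge $(i,j)$ and a state $x_j$, and write
\[
\log (S\mu)_{ij}(x_j) = \max_{x_i} f_\mu(x_i), \qquad f_\mu(x_i) := \log \phi_i(x_i) + \log \psi_{ij}(x_i,x_j) + \gamma \sum_{k\in N(i)\setminus j} w_{ki}\,\log \mu_{ki}(x_i),
\]
and similarly $f_\nu$ for $\nu$. The next step is the elementary inequality that for any two real-valued functions on a finite set,
\[
\bigl|\max_{x_i} f_\mu(x_i) - \max_{x_i} f_\nu(x_i)\bigr| \le \max_{x_i}\bigl|f_\mu(x_i) - f_\nu(x_i)\bigr|,
\]
which reduces the problem to bounding $|f_\mu - f_\nu|$ pointwise. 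The $\phi_i$ and $\psi_{ij}$ terms cancel, leaving
\[
|f_\mu(x_i) - f_\nu(x_i)| = \gamma\,\Bigl|\sum_{k\in N(i)\setminus j} w_{ki}\bigl(\log \mu_{ki}(x_i) - \log \nu_{ki}(x_i)\bigr)\Bigr|.
\]

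Applying the triangle inequality and then using the definition of $d$ to bound each $|\log \mu_{ki}(x_i) - \log \nu_{ki}(x_i)|$ by $d(\mu,\nu)$, this becomes at most $\gamma\bigl(\sum_{k\in N(i)\setminus j} w_{ki}\bigr) d(\mu,\nu) \le \gamma\,d(\mu,\nu)$, where the last step invokes $\sum_k w_{ki} \le 1$. Taking the maximum over $i$, $j$, and $x_j$ yields $d(S\mu, S\nu)\le \gamma\,d(\mu,\nu)$, and since $\gamma\in(0,1)$ this is exactly the contraction statement.

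I do not expect any serious obstacle: the only mildly subtle point is making sure the max-over-$x_i$ is handled correctly, which the $|\max f - \max g|\le \max |f-g|$ lemma resolves cleanly. Everything else is linearity of the log, the triangle inequality, and the two hypotheses $\gamma<1$ and $\sum w_{ki}\le 1$ built into Definition \ref{min_sum_op}. Note that positivity of $\phi_i$, $\psi_{ij}$ and of the messages (so that logs are defined and finite) is already guaranteed by the setup and by $\mathcal M = \bigotimes \mathbb R_+^m$.
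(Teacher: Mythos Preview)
Your proposal is correct and follows essentially the same approach as the paper: both arguments exploit that the potential terms cancel in the log-difference and that the remaining weighted sum is controlled by $\gamma\sum_k w_{ki}\le\gamma$. The only cosmetic difference is that you pass to logs first and invoke $|\max f_\mu-\max f_\nu|\le\max|f_\mu-f_\nu|$ directly, whereas the paper stays multiplicative, uses a multiply-and-divide trick with $\nu_{ki}^{\gamma w_{ki}}$ to separate the $\max$, and only then takes logarithms; the underlying inequality is the same.
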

\begin{proof}
Choose any $\mu,\nu\in\mathcal{M}$, then
\begin{align*}
\big(S\mu\big)_{ij}(x_j)&=\max_{x_i}\big\{\,\phi_i(x_i)\,\psi_{ij}(x_i,x_j)\prod_{k\in N(i)\backslash j}\mu_{ki}(x_i)^{^\gamma\,w_{ki}}\,\big\} \\
&= \max_{x_i}\Big\{\,\phi_i(x_i)\,\psi_{ij}(x_i,x_j)\prod_{k\in N(i)\backslash j}\nu_{ki}(x_i)^{\gamma\,w_{ki}}\,\prod_{k\in N(i)\backslash j}\frac{\mu_{ki}(x_i)^{\gamma\,w_{ki}}}{\nu_{ki}(x_i)^{\gamma\,w_{ki}}}
\,\Big\} \\
&\leq \max_{x_i}\big\{\,\phi_i(x_i)\,\psi_{ij}(x_i,x_j)\prod_{k\in N(i)\backslash j}\nu_{ki}(x_i)^{\gamma\,w_{ki}}\,\}\; \max_{x_i}\prod_{k\in N(i)\backslash j}\frac{\mu_{ki}(x_i)^{\gamma\,w_{ki}}}{\nu_{ki}(x_i)^{\gamma\,w_{ki}}}\\
&=\big(S\nu\big)_{ij}(x_j)\;\max_{x_i}\prod_{k\in N(i)\backslash j}\frac{\mu_{ki}(x_i)^{\gamma\,w_{ki}}}{\nu_{ki}(x_i)^{\gamma\,w_{ki}}}
\end{align*}
Given that both sides of the inequality are positive, we apply the logarithm to obtain:
\begin{align*}
    \log\big(S\mu\big)_{ij}(x_j)-\log\big(S\nu\big)_{ij}(x_j)&\leq \gamma\sum_{k\in N(i)\backslash j}w_{ki}\max_{x_i}\,\big\vert\log\mu_{ki}(x_i)-\log\nu_{ki}(x_i)\,\big\vert\\
    &\leq\gamma\max_{k\in N(i)\backslash j}\max_{x_i}\,\big\vert\log\mu_{ki}(x_i)-\log\nu_{ki}(x_i)\,\big\vert
\end{align*}
Since this inequality holds when $\mu$ and $\nu$ are interchanged, it also holds with respect to the absolute value of the left hand side. In addition, this inequality holds for any $x_j\in\Omega$ and so it holds for the maximum of the left hand side. 
\begin{align*}
    \max_{x_i}\big\vert\,\log\big(S\mu\big)_{ij}(x_j)-\log\big(S\nu\big)_{ij}(x_j)\,\big\vert\
    &\leq\gamma\max_{k\in N(i)\backslash j}\max_{x_i}\big\vert\log\mu_{ki}(x_i)-\log\nu_{ki}(x_i)\,\big\vert \\
    &\leq\gamma\max_{i\in V}\max_{j\in N(i)}\max_{k\in N(i)\backslash j}\max_{x_i}\big\vert\log\mu_{ki}(x_i)-\log\nu_{ki}(x_i)\,\big\vert \\
    &=\gamma\max_{i\in V}\max_{j\in N(i)}\max_{x_j}\big\vert\log\mu_{ij}(x_j)-\log\nu_{ij}(x_j)\,\big\vert \\
    &=\gamma\, d(\mu,\nu)
\end{align*}
Since the inequality holds for any $\{i,j\}\in E$ on the left hand side we conclude
\begin{equation*}
d\big(S\mu, S\nu\big)\leq\gamma\, d(\mu,\nu).
\end{equation*}
\end{proof}

Now our main theorem follows directly from the contraction result.

\begin{thm}\label{convergence}
The operator $S$ has a unique fixed point $\mu^\star\in\mathcal M$ and any sequence of messages defined by $\mu^{(n+1)}:=S\mu^{(n)}$ converges to $\mu^\star$. Furthermore, after $n$ iterations
\begin{equation*}
d\big( S^{(n)}\mu^{(0)},\mu^\star \big) \leq \gamma^n\,d(\mu^{(0)},\mu^\star\big).
\end{equation*}
\end{thm}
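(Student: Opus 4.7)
The plan is to invoke the Banach fixed point theorem, whose two hypotheses have already been supplied: Proposition~\ref{complete} gives that $(\mathcal{M},d)$ is a complete metric space, and Lemma~\ref{contraction} establishes that $S:\mathcal{M}\to\mathcal{M}$ is a contraction on this space with Lipschitz constant $\gamma\in(0,1)$. Combining these through the version of the contraction mapping principle cited in \cite{granas03} immediately yields existence of a unique fixed point $\mu^\star\in\mathcal{M}$ together with global attractiveness, i.e.\ the Picard sequence $\mu^{(n+1)}:=S\mu^{(n)}$ converges to $\mu^\star$ for every choice of $\mu^{(0)}\in\mathcal{M}$.

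To obtain the explicit geometric rate, I would argue by a short induction that uses the fact that $\mu^\star$ is itself fixed by $S$, so $S^{(n)}\mu^\star=\mu^\star$ for all $n\geq 0$. Iterating the contraction inequality from Lemma~\ref{contraction} with the pair $(\mu^{(n-1)},\mu^\star)$ gives
\begin{equation*}
d\big(S^{(n)}\mu^{(0)},\mu^\star\big)=d\big(S^{(n)}\mu^{(0)},S^{(n)}\mu^\star\big)\leq \gamma\,d\big(S^{(n-1)}\mu^{(0)},S^{(n-1)}\mu^\star\big)\leq\cdots\leq \gamma^n\,d\big(\mu^{(0)},\mu^\star\big),
\end{equation*}
which is exactly the bound claimed in the theorem. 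Since $\gamma<1$, this also re-derives convergence independently of any abstract citation, which is a nice self-contained check.

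There is essentially no hard step left: all of the content is already packaged in Lemma~\ref{contraction} (the analytic bound with constant exactly $\gamma$, obtained by exploiting the exponentiation of incoming messages together with $\sum w_{ki}\leq 1$) and in Proposition~\ref{complete} (completeness of the logarithmic sup-metric on $\mathcal{M}$). The only mild care needed is to note that the Banach theorem must be applied in the purely metric setting rather than a normed one, because $\mathcal{M}$ is an open positive orthant and $d$ is the logarithmic metric; the statement in \cite{granas03} covers exactly this case, so no additional structure on $\mathcal{M}$ is required. I would conclude the proof in a single short paragraph that states Banach's theorem, verifies its hypotheses by referencing Proposition~\ref{complete} and Lemma~\ref{contraction}, and then writes out the telescoping inequality above to obtain the convergence rate.
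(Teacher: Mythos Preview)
Your proposal is correct and matches the paper's own proof essentially line for line: the paper simply cites Proposition~\ref{complete} for completeness of $(\mathcal{M},d)$, Lemma~\ref{contraction} for the contraction property with constant $\gamma$, and then invokes Banach's fixed point theorem. Your added telescoping inequality for the rate $d(S^{(n)}\mu^{(0)},\mu^\star)\le\gamma^n d(\mu^{(0)},\mu^\star)$ is a harmless elaboration that the paper leaves implicit in its citation of the contraction mapping principle.
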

\begin{proof}
$S$ is a contraction with respect to the metric $d$ according to Lemma \ref{contraction} and the space $\mathcal M$ is complete with respect to $d$.  The result holds by applying Banach's fixed point theorem. 
\end{proof}

This result shows that CCBP is guaranteed to converge when the weights satisfy the simple constraints stated in the definition of the operator.  Note that the constraints we have
on the weights do not depend on the strength
of the potential functions in the graphical
model.  The weights can also be easily
selected based on the degree of each node.

Finally we note that the CCBP operator has a unique fixed point once the weights $w_{ki}$ and $\gamma$ are fixed, but the choice of weights and $\gamma$ do affect the resulting fixed point.   A simple choice for these paramaters involves setting $w_{ki}=1\slash\big(d(i)-1\big)$ and $\gamma$ to be a value close to 1.

\subsection{Characterization of Beliefs}

Although CCBP is guaranteed to converge, the algorithm is only useful if it returns a meaningful result. When the underlying graph is acyclic the traditional max-product algorithm computes the exact max-marginals. In this section, we provide an analogous characterization of the beliefs obtained with CCBP. 

\subsubsection{Min-Sum Algorithm}

CCBP along with other related methods improve convergence by changing the optimization landscape. For our purposes, it is more natural to discuss this optimization problem in terms of minimizing an energy as opposed to maximizing a joint distribution. Given that the potentials are assumed to be positive, the joint distribution can be equivalently written as
\begin{equation*}
    \mathbb P(X=x)=\frac{1}{Z}e^{-E(x)},
\end{equation*}
where $E$ is an energy. In the case of a pairwise model, this energy is
\begin{equation}\label{eq:min-sum}
    E(x)=\sum_{i\in V}g_i(x_i)+\sum_{\{i,j\}\in E}h_{ij}(x_i,x_j),
\end{equation}
where $g_i(x_i)=-\log\,\phi_i(x_i)$ and $h_{ij}(x_i,x_j)=-\log\,\psi_{ij}(x_i,x_j)$ are cost functions.  With these definitions  the most probable (MAP) solution is the configuration of the random vector that minimizes the energy $E(x)$. 

The max-product CCBP algorithm can be implemented with negative log probabilities, where the max-product becomes a min-sum. The equivalent message passing operator is
\begin{equation*}
    (S\mu)_{ij}(x_j)=\min_{x_i}\big\{\,g_i(x_i)+h_{ij}(x_i,x_j)+\gamma\sum_{k\in N(i)\backslash j}w_{ki}\,\mu_{ki}(x_i)\big\},
\end{equation*}
In this case, the resulting beliefs provide an approximation of the \emph{min-marginals} of the energy. For the remainder of this section, we focus on the min-sum formulation.

\subsubsection{Tree-Structured Graphs}

Let $\mu$ be the unique fixed point of $S$, the belief $b_j:\Omega\rightarrow\mathbb{R}$  is
\begin{equation*}
b_j(x_j)=g_j(x_j)+\sum_{i\in N(j)}\mu_{ij}(x_j).
\end{equation*}
The main objective of this section is prove that these beliefs are the exact min-marginals of a weighted energy $E_j(x)$.  This energy is closely related to $E(x)$, but a key difference is that each cost function is weighted according to the weights and damping factor used in the definition of $S$. We prove that each belief $b_j$ is the min-marginal of a different energy function $E_j(x)$ as the weights of each term depend on the choice of $j$.

Before defining the weighted energy for an arbitrary tree-structured graph we provide a simple example of how to write down this energy for a small graph. In order to simplify the expressions for the energies, we define the cost function $H_{ij}(x_i,x_j)=g_i(x_i)+h_{ij}(x_i,x_j)$.

\begin{exmp}
Let $G$ be the graph shown in Figure \ref{ex_acyc}.  In this case, the energy $E$ is given by
\begin{equation*}
E(x)=g_1(x_1)+H_{21}(x_2,x_1)+ H_{32}(x_3,x_2)+H_{43}(x_4,x_3)+H_{53}(x_5,x_3).
\end{equation*} 

Let $w_{ki}=1\slash\big(d(i)-1\big)$ in the definition of $S$.  When CCBP is applied to this graph, the resulting belief function $b_1$ is the min-marginal of the energy: 
\begin{equation*}
E_1(x)=g_1(x_1)+H_{21}(x_2,x_1)+\gamma H_{32}(x_3,x_2)+\frac{\gamma^2}{2}H_{43}(x_4,x_3)+\frac{\gamma^2}{2}H_{53}(x_5,x_3),
\end{equation*} 
This energy contains the same cost functions as the energy $E$, but the difference is that each term $H_{ij}(x_i,x_j)$ is multiplied by a product of weights and a power of $\gamma$. 
\begin{figure}[H]
	\centering		
	\includegraphics[width=70mm]{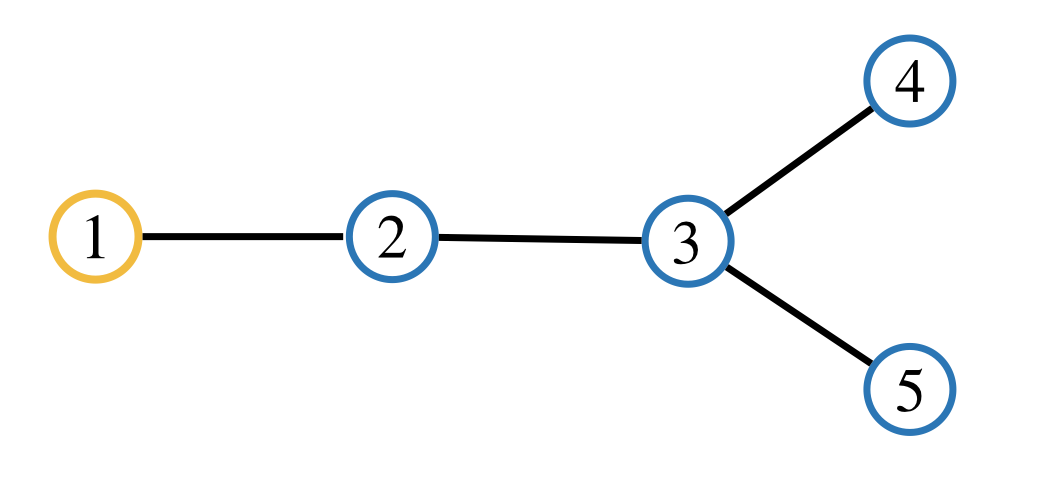}
	\caption{A simple tree-structured graph.}
	\label{ex_acyc}
\end{figure}
\end{exmp}

Next we generalize the energy in the example above to the case of an arbitrary acyclic graph.    
Let $\mathcal T (j)$ be a tree-structured graph with a distinguished root node $j\in V$.
Once a tree $\mathcal T$ is rooted at a node $j$, every node $i \neq j$ has a unique parent $\mathcal{P}(j,i)$, a set of children $\mathcal C(j,i)$, and a set of descendants $\mathcal{D}(j,i)$.  Let $\mathcal T(j,i)$ denote the subtree of $\mathcal T(j)$
rooted at a node $i$ as illustrated in Figure \ref{root}.   The subtree $\mathcal T(j,i)$ includes node $i$ and its descendants. Let $R(\mathcal T(j))$ be the depth of a rooted tree.  Let $N_d(\mathcal T(j))\subseteq V$ with $d\ge0$ be the set of nodes at distance $d$ from the root $j$.

\begin{figure}[H]
\centering		
	\includegraphics[width=110mm]{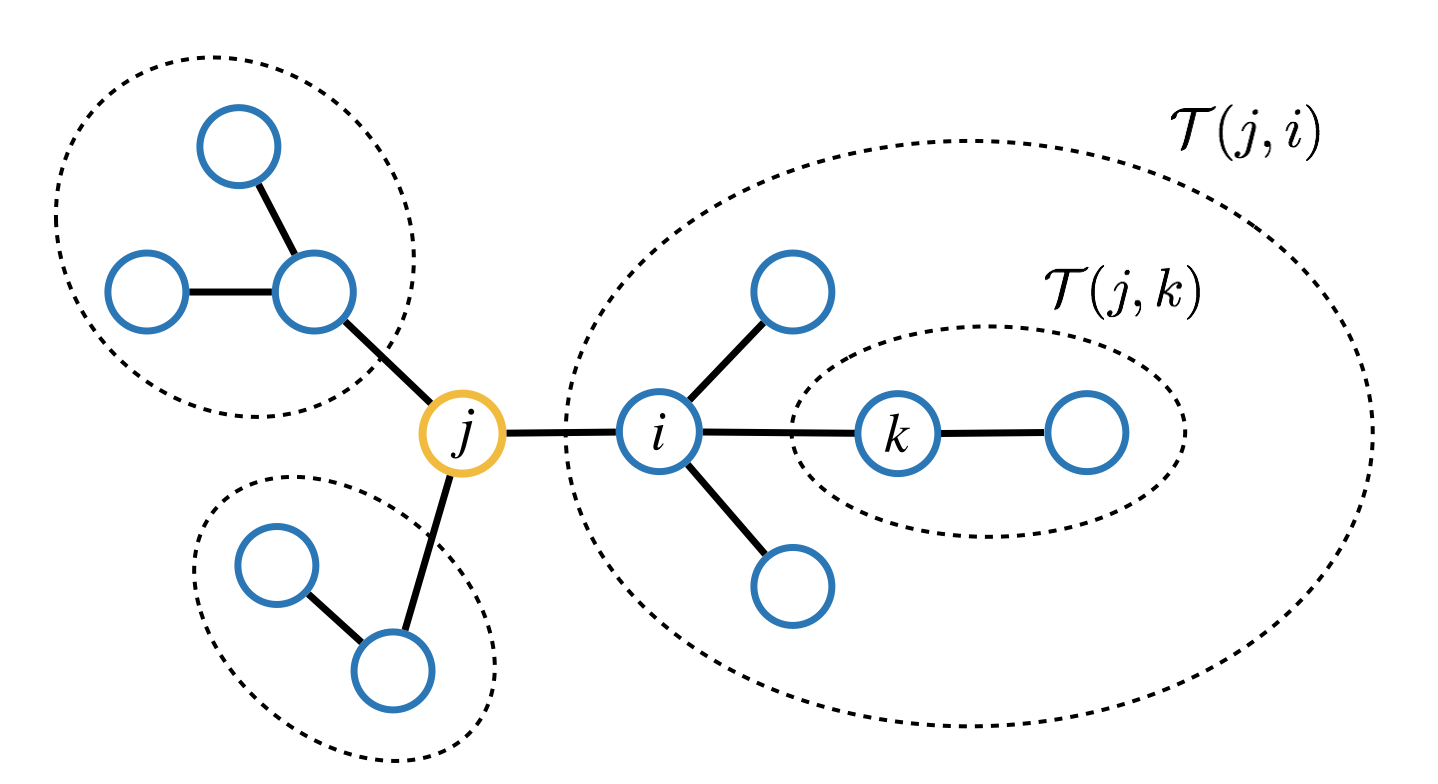}
	\caption{Subtrees of $\mathcal T(j)$}
	\label{root}
\end{figure}

\begin{defn}
Let $w:V\times V \rightarrow \mathbb{R}$ be the weight function 
\begin{equation*}
w(i,j)=\prod_{(k,\ell)\in E(i,j)}w_{k\ell},
\end{equation*}
where $E(i,j)$ is the collection of directed edges along the path from node $i$ to $j$.
\end{defn}

\begin{defn}\label{energy_trees}
Let $E_j:\Omega^n\rightarrow \mathbb{R}$ be the weighted energy given by
\begin{equation*}
E_j(x)=g_j(x_j)+\sum_{i \in N(j)} \sum_{d=0}^{R(\mathcal T(j,i))-1}\sum_{k\in N_d(\mathcal T(j,i))} \gamma^d w(k,i) H_{k\mathcal P(j,k)}(x_k,x_{\mathcal P(j,k)}),
\end{equation*}
\end{defn} 

This energy is closely related to $E(x)$ but it puts more weight on terms ``near'' node $j$.  For each edge $(k,\mathcal P(j,k))$ the cost associated with that edge is scaled by $\gamma^d w(k,i)$.  When the damping factor and weights are removed from $E_j$, this energy is exactly $E(x)$.
\begin{prop}
If $w_{ki}=1$ for all $\{k,i\}\in E$ and $\gamma=1$, then $E_j(x)=E(x)$. 
\end{prop}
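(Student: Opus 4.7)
The plan is to set $w_{ki}=1$ and $\gamma=1$ in Definition~\ref{energy_trees} and show that the resulting expression collapses, term by term, onto $E(x)$ by exploiting the bijection between non-root nodes of $\mathcal T(j)$ and edges of $\mathcal T(j)$. Since $w(k,i)$ is a product of the factors $w_{k\ell}$ along a path, the assumption $w_{k\ell}=1$ forces $w(k,i)=1$ for every $k,i$, and $\gamma=1$ trivially gives $\gamma^d=1$. Substituting these into $E_j$ kills all scaling factors and leaves
\begin{equation*}
E_j(x)=g_j(x_j)+\sum_{i\in N(j)}\sum_{d=0}^{R(\mathcal T(j,i))-1}\sum_{k\in N_d(\mathcal T(j,i))}H_{k\mathcal P(j,k)}(x_k,x_{\mathcal P(j,k)}).
\end{equation*}

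Next I would identify the triple sum with a single sum over $V\setminus\{j\}$. Every non-root node $k$ of $\mathcal T(j)$ lies in a unique subtree $\mathcal T(j,i)$, namely the one where $i\in N(j)$ is the first vertex on the unique path from $j$ to $k$; and within that subtree $k$ appears at exactly one depth $d=\mathrm{dist}(j,k)-1$. Hence the triple sum enumerates each $k\in V\setminus\{j\}$ exactly once, giving
\begin{equation*}
E_j(x)=g_j(x_j)+\sum_{k\in V\setminus\{j\}}H_{k\mathcal P(j,k)}(x_k,x_{\mathcal P(j,k)}).
\end{equation*}

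Finally I would expand $H_{k\mathcal P(j,k)}(x_k,x_{\mathcal P(j,k)})=g_k(x_k)+h_{k\mathcal P(j,k)}(x_k,x_{\mathcal P(j,k)})$. The $g_k$ pieces combine with $g_j(x_j)$ to produce $\sum_{i\in V}g_i(x_i)$. The $h$ pieces correspond to the set of parent-child pairs in $\mathcal T(j)$, which is exactly the edge set $E$ of the tree (each edge of a rooted tree is in bijection with its non-root endpoint). Thus the $h$ sum equals $\sum_{\{i,j\}\in E}h_{ij}(x_i,x_j)$, and combining gives $E_j(x)=E(x)$.

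There is no real obstacle here; the only thing that needs care is the bookkeeping argument that the triple sum visits each non-root node exactly once and that non-root nodes are in bijection with tree edges. Both are immediate consequences of the definitions of $\mathcal T(j,i)$, $N_d(\mathcal T(j,i))$, and $\mathcal P(j,\cdot)$, so the proof is essentially a direct substitution followed by a one-line combinatorial observation.
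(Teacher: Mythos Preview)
Your proposal is correct and follows essentially the same approach as the paper: substitute $w_{ki}=1$, $\gamma=1$ so that the coefficients $\gamma^d w(k,i)$ vanish, then collapse the triple sum to a single sum over $V\setminus\{j\}$ via the bijection between non-root nodes and the index set, and finally expand $H$ and identify parent--child pairs with edges of the tree. The only cosmetic difference is ordering---the paper expands $H$ before collapsing the triple sum, while you collapse first---but the argument is otherwise identical.
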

\begin{proof}
\begin{align*}
E_j(x)&=g_j(x_j)+\sum_{i \in N(j)} \sum_{d=0}^{R(\mathcal T(j,i))-1}\sum_{k\in N_d(\mathcal T(j,i))} \gamma^d w(k,i) H_{k\mathcal P(j,k)}(x_k,x_{\mathcal P(j,k)}) \\
&=g_j(x_j)+\sum_{i \in N(j)} \sum_{d=0}^{R(\mathcal T(j,i))-1}\sum_{k\in N_d(\mathcal T(j,i))}\Big(g_k(x_k)+h_{k\mathcal P(j,k)}\big(x_k,x_{\mathcal P(j,k)}\big)\Big) \\
&=g_j(x_j)+\sum_{k\in V\backslash j}\Big(g_k(x_k)+h_{k\mathcal P(j,k)}\big(x_k,x_{\mathcal P(j,k)}\big)\Big) \\
&=\sum_{k\in V}g_k(x_k)+\sum_{\{k,i\}\in E}h_{ki}(x_k,x_i) \\
&=E(x).
\end{align*}
\end{proof}

Next, we prove that CCBP computes the exact min-marginals of the weighted energy in Definition \ref{energy_trees}. The main idea of the argument is to use that computing the min-marginals in a tree-structured graphical model can be broken down into subproblems that can be solved recursively. Each subproblem involves a function $F_{ji}:\Omega\rightarrow\mathbb{R}$ that corresponds to the min-marginals of an optimization problem defined by $\mathcal T(j,i)$.

\begin{lemma}\label{lemma:F_ji}
Let  $F_{ji}:\Omega\rightarrow\mathbb R$ be the function given by
\begin{equation*}
F_{ji}(x_i)=\min_{x_{\mathcal D(j,i)}}\Big\{ \sum_{d=1}^{R(\mathcal T(j,i))-1} \sum_{k\in N_d(\mathcal T(j,i))}\gamma^d w(k,i)\, H_{k\mathcal P(j,k)}\big(x_k,x_{\mathcal P(j,k)}\big) \Big\},
\end{equation*}
then $F_{ji}(x_i)=\gamma \sum\limits_{k\in N(i)\backslash\mathcal P(j,i)} w_{ki}\mu_{ki}(x_i)$ where $\mu$ is the fixed point of $S$.
\end{lemma}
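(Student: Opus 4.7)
The plan is to prove this by strong induction on the depth $R(\mathcal{T}(j,i))$ of the rooted subtree. The identification $N(i) \setminus \mathcal{P}(j,i) = \mathcal{C}(j,i)$ is what makes the target formula compatible with a recursive structure, and the fixed-point equation for $S$ will provide exactly the recursion we need.

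\textbf{Base case.} If $i$ is a leaf of $\mathcal{T}(j)$, then $R(\mathcal{T}(j,i)) = 0$, the sum defining $F_{ji}(x_i)$ runs from $d=1$ to $-1$ and is empty, and the set $\mathcal{C}(j,i) = N(i)\setminus\mathcal{P}(j,i)$ is empty, so both sides equal $0$.

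\textbf{Inductive step.} Assume the claim for every subtree of depth strictly less than $R(\mathcal{T}(j,i))$. The key observation is that the descendants of $i$ partition into the disjoint subtrees $\mathcal{T}(j,k)$ indexed by children $k \in \mathcal{C}(j,i)$, and each of those subproblems is coupled to the rest only through $x_k$ and the edge term $H_{ki}(x_k,x_i)$. Concretely, I would split the sum in the definition of $F_{ji}$ according to which $\mathcal{T}(j,k)$ each node lies in, peel off the $d=1$ layer (which contributes $\gamma w_{ki} H_{ki}(x_k,x_i)$ for each child $k$), and reindex the remaining terms via $d' = d-1$. Here the central bookkeeping identity is
\begin{equation*}
\gamma^{d} w(k',i) \;=\; \gamma\, w_{ki}\,\cdot\,\gamma^{d-1} w(k',k)
\end{equation*}
for any $k' \in \mathcal{T}(j,k)$ at distance $d$ from $i$, which follows because the path from $k'$ to $i$ passes through $k$ so $w(k',i) = w(k',k)\,w_{ki}$, and the distance from $k'$ to $k$ is $d-1$.

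Pulling the common factor $\gamma w_{ki}$ out of each child's contribution, the minimization over $x_{\mathcal{D}(j,i)}$ decouples across children, and the expression collapses to
\begin{equation*}
F_{ji}(x_i) \;=\; \sum_{k \in \mathcal{C}(j,i)} \gamma\, w_{ki} \min_{x_k}\Bigl\{\, H_{ki}(x_k,x_i) + F_{jk}(x_k)\,\Bigr\}.
\end{equation*}
Now I would invoke the induction hypothesis on each subtree $\mathcal{T}(j,k)$, which has depth one smaller, giving $F_{jk}(x_k) = \gamma \sum_{\ell \in N(k) \setminus i} w_{\ell k}\,\mu_{\ell k}(x_k)$. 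Substituting this into the bracket and comparing with the fixed-point equation for $S$ in min-sum form
\begin{equation*}
\mu_{ki}(x_i) \;=\; \min_{x_k}\Bigl\{\, g_k(x_k) + h_{ki}(x_k,x_i) + \gamma \sum_{\ell \in N(k)\setminus i} w_{\ell k}\,\mu_{\ell k}(x_k)\,\Bigr\}
\end{equation*}
yields exactly $\min_{x_k}\{H_{ki}(x_k,x_i) + F_{jk}(x_k)\} = \mu_{ki}(x_i)$, completing the induction.

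The routine computations are straightforward, but the main obstacle is the careful combinatorial bookkeeping: verifying the factorization $\gamma^d w(k',i) = \gamma w_{ki} \cdot \gamma^{d-1} w(k',k)$, matching the index ranges $d \in \{2,\ldots,R(\mathcal{T}(j,i))-1\}$ with $d' \in \{1,\ldots,R(\mathcal{T}(j,k))-1\}$, and checking that the depth decreases by exactly one so the induction hypothesis applies.
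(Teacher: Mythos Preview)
Your proposal is correct and follows essentially the same route as the paper: induction on the depth of $\mathcal{T}(j,i)$, peel off the $d=1$ layer, factor out $\gamma w_{ki}$ via the identity $\gamma^{d}w(k',i)=\gamma w_{ki}\cdot\gamma^{d-1}w(k',k)$, decouple the minimization across children, apply the inductive hypothesis to each $F_{jk}$, and then invoke the fixed-point equation for $S$ to identify the inner minimum with $\mu_{ki}(x_i)$. The only cosmetic difference is the depth convention in the base case (you take $R=0$ for a leaf, the paper takes $R=1$); either way the defining sum is empty and both sides vanish.
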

\begin{proof}
This claim can be proven by inducting on the depth of the subtree $\mathcal T(j,i)$. In the base case when the depth is 1, then $F_{ji}(x_i) = 0$ and the result follows trivially.
 
Now for the induction step,
\begin{align*}
F_{ji}(x_i)&=\min_{x_{\mathcal D(j,i)}}\Big\{ \sum_{d=1}^{R(\mathcal T(j,i))-1}\sum_{k\in N_d(\mathcal T(j,i))}\gamma^d\,w(k,i)H_{k\mathcal P(j,k)}\big(x_k,x_{\mathcal P(j,k)}\big) \Big\}\\
&=\min_{x_{\mathcal D(j,i)}}\Big\{\sum_{k\in N(i)\backslash \mathcal P(j,i)}\gamma w_{ki}H_{ki}(x_k,x_i)+\sum_{d=2}^{R(\mathcal T(j,i))-1} \sum_{k^\prime\in N_d(\mathcal T(j,i))}\gamma^d w(k^\prime,i)\, H_{k^\prime\mathcal P(j,k^\prime)}\big(x_{k^\prime},x_{\mathcal P(j,k^\prime)}\big)\Big\} \\
&=\min_{x_{\mathcal D(j,i)}}\Big\{\sum_{k\in N(i)\backslash \mathcal P(j,i)}\gamma w_{ki}\Big(H_{ki}(x_k,x_i)+\sum_{d=1}^{R(\mathcal T(j,k))-1} \sum_{k^\prime\in N_d(\mathcal T(j,k))}\gamma^d w(k^\prime,k)\, H_{k^\prime\mathcal P(j,k^\prime)}\big(x_{k^\prime},x_{\mathcal P(j,k^\prime)}\big)\Big)\Big\} \\
&=\gamma \sum_{k\in N(i)\backslash \mathcal P(j,i)} w_{ki}\min_{x_k} \Big\{H_{ki}(x_k,x_i)+ \min_{x_{\mathcal D(j,k)}} \Big\{\sum_{d=1}^{R(\mathcal T(j,k))-1} \sum_{k^\prime\in N_d(\mathcal T(j,k))}\gamma^d w(k^\prime,k)\, H_{k^\prime\mathcal P(j,k^\prime)}\big(x_{k^\prime},x_{\mathcal P(j,k^\prime)}\big)\Big\}\Big\} \\
&=\gamma \sum_{k\in N(i)\backslash \mathcal P(j,i)} w_{ki}\min_{x_k} \Big\{H_{ki}(x_k,x_i)+F_{jk}(x_k)\Big\} \\
&=\gamma \sum_{k\in N(i)\backslash \mathcal P(j,i)} w_{ki}\min_{x_k} \Big\{H_{ki}(x_k,x_i)+ \gamma \sum_{k' \in N(k)\backslash\mathcal P(j,k)} w_{k'k} \mu_{k'k}(x_k) \Big\}\\
&=\gamma \sum_{k\in N(i)\backslash \mathcal P(j,i)}  w_{ki} \mu_{ki}(x_i).
\end{align*}
\end{proof}

\begin{thm}\label{char}
The belief $b_j$ is the exact min-marginal of $E_j$ with respect to the $j$-th variable.
\end{thm}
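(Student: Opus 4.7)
The plan is to compute the min-marginal of $E_j$ directly and show it equals $b_j(x_j)$, exploiting the tree decomposition that is already baked into the definition of $E_j$ and the recursive identity supplied by Lemma \ref{lemma:F_ji}.

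First I would write the min-marginal as
\[
\min_{x_{V\setminus j}} E_j(x)= g_j(x_j)+ \min_{x_{V\setminus j}}\sum_{i\in N(j)}\sum_{d=0}^{R(\mathcal T(j,i))-1}\sum_{k\in N_d(\mathcal T(j,i))}\gamma^d w(k,i)H_{k\mathcal P(j,k)}(x_k,x_{\mathcal P(j,k)}).
\]
The crucial structural observation is that the vertex sets of the subtrees $\mathcal T(j,i)$ for $i\in N(j)$ partition $V\setminus\{j\}$, and each inner summand depends only on variables inside one such subtree (together with $x_j$). Therefore the minimization distributes over $i\in N(j)$, and the outer variable in each piece is the child $x_i$ of the root.

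Next I would peel off the $d=0$ contribution from each subtree. For $d=0$ the only node in $N_0(\mathcal T(j,i))$ is $i$ itself, whose parent is $j$, and $w(i,i)=1$ since the path from $i$ to $i$ is empty. Thus that term is simply $H_{ij}(x_i,x_j)$. Separating it from the $d\geq 1$ tail leaves
\[
\min_{x_{V\setminus j}}E_j(x)=g_j(x_j)+\sum_{i\in N(j)}\min_{x_i}\Bigl\{H_{ij}(x_i,x_j)+\min_{x_{\mathcal D(j,i)}}\!\!\sum_{d=1}^{R(\mathcal T(j,i))-1}\!\!\sum_{k\in N_d(\mathcal T(j,i))}\!\!\gamma^d w(k,i)H_{k\mathcal P(j,k)}(x_k,x_{\mathcal P(j,k)})\Bigr\}.
\]
The inner double minimum is exactly $F_{ji}(x_i)$, so by Lemma \ref{lemma:F_ji} it equals $\gamma\sum_{k\in N(i)\setminus j}w_{ki}\mu_{ki}(x_i)$, using that $\mathcal P(j,i)=j$ when $i\in N(j)$.

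Finally I would recognize the remaining minimization as the fixed point equation. Substituting $H_{ij}=g_i+h_{ij}$ gives
\[
\min_{x_i}\Bigl\{g_i(x_i)+h_{ij}(x_i,x_j)+\gamma\!\!\sum_{k\in N(i)\setminus j}\!\!w_{ki}\mu_{ki}(x_i)\Bigr\}=(S\mu)_{ij}(x_j)=\mu_{ij}(x_j),
\]
since $\mu$ is the fixed point of $S$ (in min-sum form). Summing over $i\in N(j)$ and adding $g_j(x_j)$ yields $\sum_{x_{V\setminus j}}$-min of $E_j$ equal to $g_j(x_j)+\sum_{i\in N(j)}\mu_{ij}(x_j)=b_j(x_j)$, which is the claim. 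I do not anticipate a real obstacle; the only subtle bookkeeping is ensuring the $d=0$ term cleanly separates and that the index set $N(i)\setminus \mathcal P(j,i)$ in Lemma \ref{lemma:F_ji} matches $N(i)\setminus j$ when $i$ is a neighbor of the root, which it does by construction of the rooted tree.
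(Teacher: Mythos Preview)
Your proposal is correct and follows essentially the same route as the paper: expand $E_j$, distribute the minimization across the subtrees $\mathcal T(j,i)$, peel off the $d=0$ term $H_{ij}(x_i,x_j)$, identify the remaining inner minimum as $F_{ji}(x_i)$, invoke Lemma~\ref{lemma:F_ji}, and recognize the resulting expression as the fixed-point equation $\mu_{ij}(x_j)$. Your write-up is in fact slightly more explicit than the paper's about why the minimization distributes (the partition of $V\setminus\{j\}$) and about $w(i,i)=1$, but the argument is the same.
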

\begin{proof}
Choose any $j\in V$, then the min-marginal $p_j$ of $E_j$ is
\begin{align*}
p_j(\tau)&=\min_{\{x\,:\,x_j=\tau\}} E_j(x)\\
&=g_j(\tau)+\min_{\{x\,:\,x_j=\tau\}} \Big\{\sum_{i \in\mathcal N(j)} \sum_{d=0}^{R(\mathcal T(j,i))-1}\sum_{k\in\mathcal N_d(\mathcal T(j,i))} \gamma^d w(k,i) H_{k\mathcal P(j,k)}(x_k,x_{\mathcal P(j,k)})\Big\} \\
&=g_j(\tau)+\sum_{i\in\mathcal N(j)}\min_{x_i}\Big\{H_{ij}(x_i,\tau)+ \min_{x_{\mathcal D(j,i)}}\Big\{\sum_{d=1}^{R(\mathcal T(j,i))-1} \sum_{k\in\mathcal N_d(\mathcal T(j,i))}\gamma^d w(k,i)\, H_{k\mathcal P(j,k)}(x_{k},x_{\mathcal P(j,k)})\Big\}\Big\}
\end{align*}
Now Lemma \ref{lemma:F_ji} implies that
\begin{align*}
p_j(\tau)&=g_j(\tau)+\sum_{i\in N(j)}\min_{x_i}\Big\{H_{ij}(x_i,\tau)+ F_{ji}(x_i)\Big\}\\
&=g_j(\tau)+\sum_{i\in N(j)}\min_{x_i}\Big\{H_{ij}(x_i,\tau)+\gamma \sum_{k\in N(i)\backslash j}w_{ki}\mu_{ki}(x_i)\Big\} \\
&=g_j(\tau)+\sum_{i\in N(j)}\mu_{ij}(\tau) \\
&=b_j(\tau).
\end{align*}
\end{proof}


\section{Numerical Experiments}\label{sec:exps}

In this section, we present a series of numerical experiments that evaluate the performance of CCBP. First, we use the sum-product version of BP and CCBP to perform inference in the spin glass model.  We compare the results obtained by these algorithms to the exact marginals computed by brute force.  We also demonstrate the practicality of the max-product CCBP algorithm by using it to restore a noisy image.

\subsection{Spin Glass Model}

The objective of this experiment is to evaluate the performance of CCBP on inference tasks where traditional BP fails to converge. We apply each algorithm to a large number of problems, then evaluate the results with several performance metrics. 

\subsubsection{Experimental Settings}\label{sec:exp_settings}

The main idea behind this experiment is evaluate our algorithm on random graphical models while varying some parameter which influences the difficulty of performing inference.  The purpose of this section is to describe how problems are generated and define the performance metrics used to evaluate the algorithms. 

Let $G=(V,E)$ be an undirected graph.  Let $\Omega = \{-1,1\}$ be the set of possible outcomes for each random variable. The probability of a configuration of the random variables is given by
\begin{equation*}
\mathbb P(X=x)=\frac{1}{Z}\exp\bigg(-\sum_{i\in V} x_i y_i -\sum_{\{i,j\}\in E}\lambda_{ij}x_i x_j\bigg).
\end{equation*} 

In each problem instance, we generate a graph with 10 nodes such that the probability of two nodes being connected is 0.5.  We refer to this value as the edge appearance probability. Each $y_i$ is  uniformily sampled from $\{-1,1\}$ and the $\lambda_{ij}$ are independently sampled from a uniform distribution The magnitude of $\lambda_{ij}$ controls how strongly neighboring states are coupled and the sign determines whether neighbors prefer to be aligned or misaligned.

For each algorithm, we use the initialization $\mu^{(0)}_{ij}=(1,1)$ for all $\{i,j\}\in E$ and update the messages in parallel. Let $N=10^3$ be the maximum number of iterations. Let $\epsilon=10^{-2}$ be a threshold that indicates when the messages have sufficiently converged. For traditional BP, we use the damping factor $\alpha=0.9$ to prioritize convergence since we consider difficult problem instances. We use the damping factor $\gamma=0.9$ in CCBP and set the weights uniformly.

The performance of each algorithm is determined by the accuracy of the resulting beliefs, runtime, and rate of convergence. The accuracy of the beliefs is determined by computing the measure square error (MSE) between the normalized beliefs and the true marginals,
\begin{equation*}
    \text{MSE }=\frac{1}{n}\sum_{i}\sum_{x_i}\vert \,\hat{b}_i(x_i)-p_i(x_i)\,\vert^2,
\end{equation*}
where $p_i$ is the exact marginal of node $i\in V$ (computed by brute force). The runtime refers to the number of iterations until convergence the algorithm sufficiently converges. We only report the runtime for instances where the algorithm converges.

\subsubsection{Coupling Factor}

BP is known to fail when the message passing operator has repulsive fixed points. This complicates fixed point iteration because the scheme becomes unstable and is unlikely to converge for any damping factor. This scenario can be simulated by sampling the coupling factors $\lambda_{ij}$ from a uniform distribution centered about zero. In this case, repulsive fixed point are likely to emerge because the coupling factors are both positive and negative. 

In this experiment, we consider models where $\lambda_{ij}\sim\text{Unif}(-\sigma,\sigma)$ with $\sigma\geq0$. The magnitude of $\sigma$ was varied from 0 to 5 in increments of $\Delta\sigma=0.5$. For each value of $\sigma$, we generate 100 graphical models. The results of this experiment in terms of the previously defined performance metrics are shown in Figures \ref{fig:sigma-runtime} and \ref{fig:sigma-mse}. Note that each data point represents the average over all runs corresponding to a given value of $\sigma$.

\begin{figure}[H]
    \centering
    \includegraphics[width=155mm]{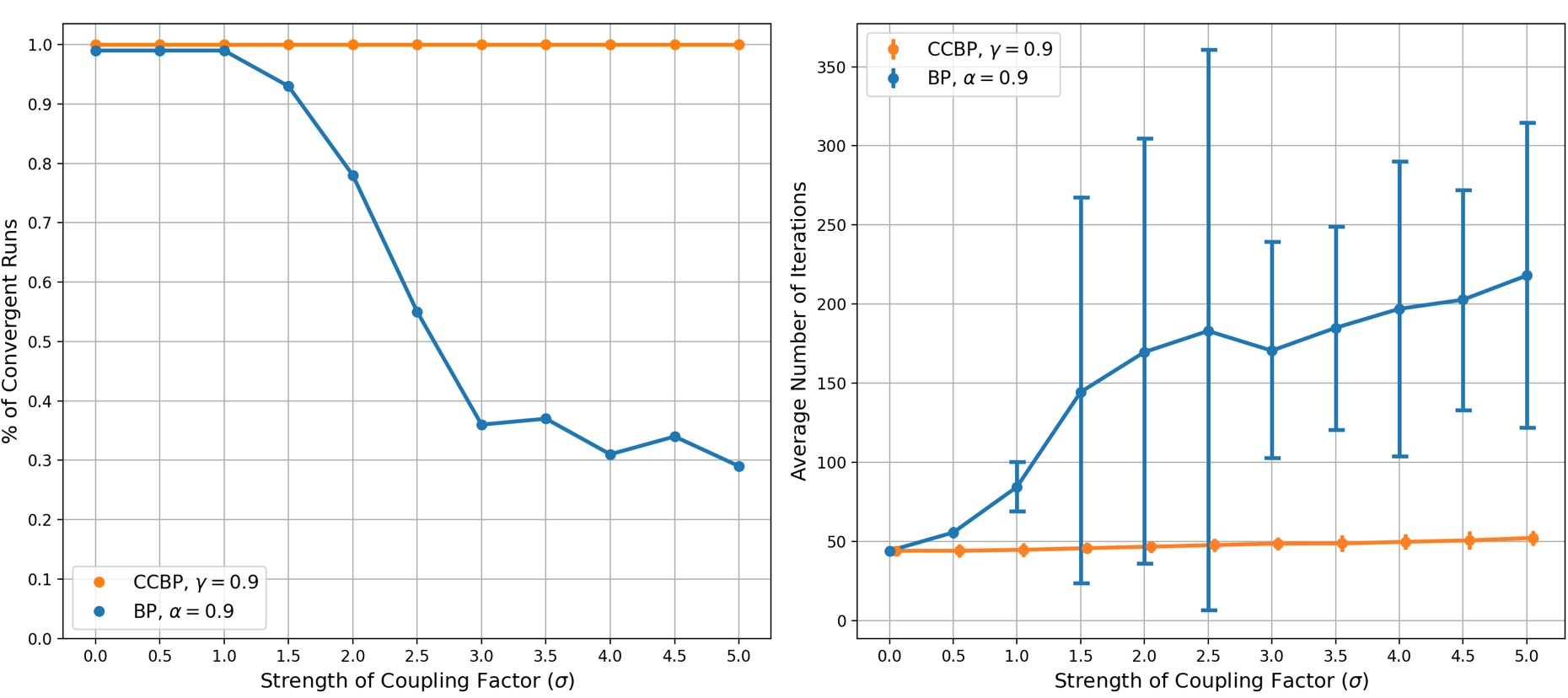}
    \caption{Convergence rate and runtime of BP algorithms}
    \label{fig:sigma-runtime}
\end{figure}
\begin{figure}[H]
    \centering
    \includegraphics[width=155mm]{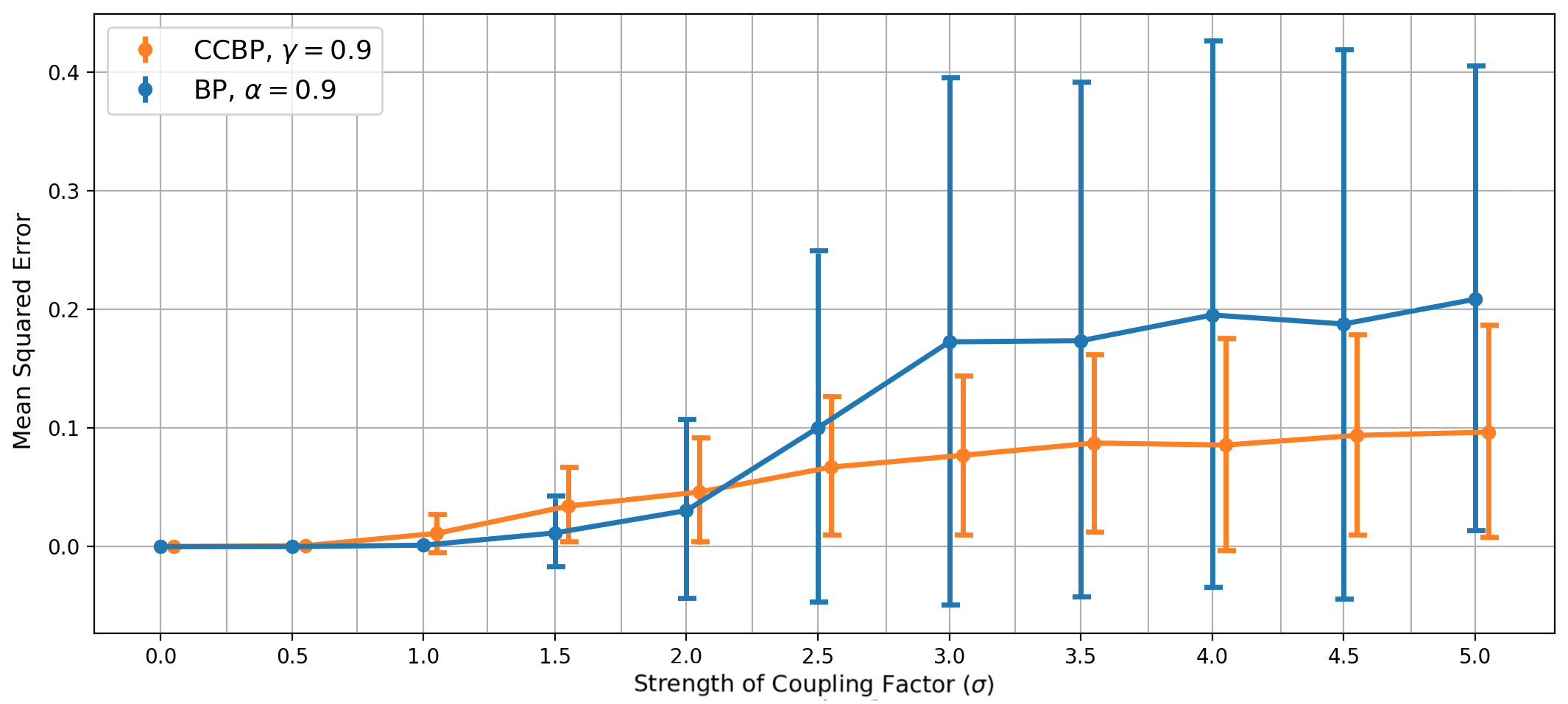}
    \caption{Accuracy of BP algorithms}
    \label{fig:sigma-mse}
\end{figure}

The results shown in Figure \ref{fig:sigma-runtime} and \ref{fig:sigma-mse} illustrate some key advantages of CCBP. First, we see that the algorithm always converges as expected due the result in Theorem \ref{convergence}. Moreover, we see that the runtime of the algorithm is relatively constant even as the strength of the coupling factor increases. This behavior is also expected due the operator being a contraction. In this case, the distance between the unique fixed point and current set of messages decreases by a factor of $\gamma$ on each iteration (see Theorem \ref{convergence}). 

One key distinction between CCBP and other alternative message passing algorithms is that we apply weights to the messages instead of the potentials. In previous methods weights were applied to potentials so that the strength of the coupling factor is sufficiently small to ensure convergence. This idea traces back to previous works that aim to derive criteria that guarantees when BP converges (e.g. Dobrushin's condition, see \cite{tatikonda03}, \cite{tatikonda02}). One drawback of this approach is that the weights must be smaller as the strength of the potentials increases. One important advantage of CCBP is that the condition on the weights is independent of the strength of the potentials. In the experiments described here the weights were set uniformly based on the degree of each node.

\subsubsection{Edge Connectivity}

BP is also known to either fail to converge or return a poor approximation when the topology of the graph is complex. In particular, the performance of the algorithm suffers when the graph contains many cycles because this leads to feedback loops. In this experiment, we compare the performance of the belief propagation algorithms while gradually increasing the connectivity of the graph. 

We consider graphical models where the edge appearance probability $p$ is varied from 0 to 1 with increments of $\Delta p =0.1$. For each value of $p$, we obtain 100 problem instances by generating a random graph and independently sampling $\lambda_{ij}\sim$ Unif($-5,5)$ for each problem instance. The results of this experiment are shown in Figures \ref{fig:edges-runtime} and \ref{fig:edges-mse}. 

\begin{figure}[H]
    \centering
    \includegraphics[width=155mm]{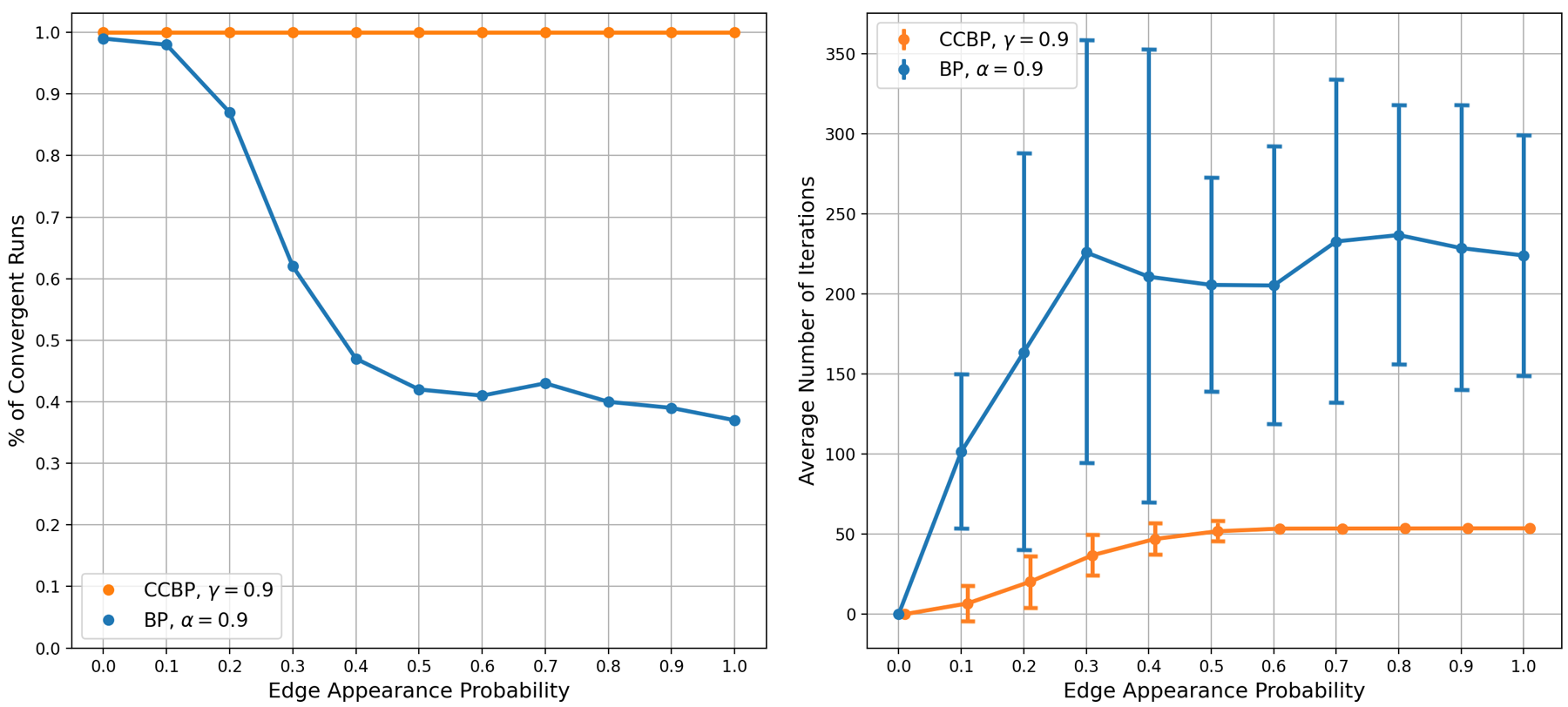}
    \caption{Convergence rate and runtime of BP algorithms}
    \label{fig:edges-runtime}
\end{figure}
\begin{figure}[H]
    \centering
    \includegraphics[width=155mm]{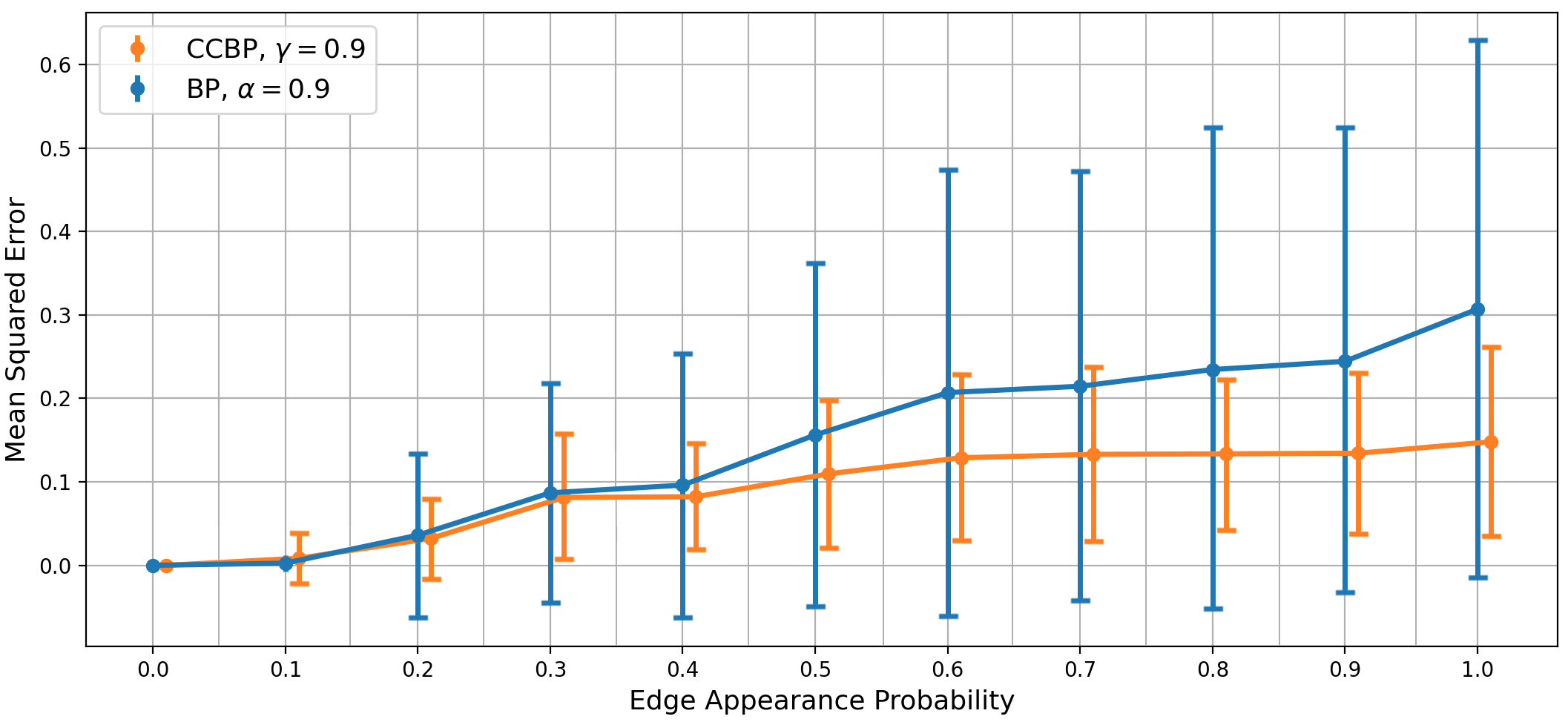}
    \caption{Accuracy of BP algorithms}
    \label{fig:edges-mse}
\end{figure}

As expected, we see that CCBP converges independent of the topology of the graph. It is also important to note that the rate of convergence is also independent of the topology. This is again a consequence of the message passing operator being a contraction
with Liptschitz constant $\gamma$.  We see that CCBP converged after at most 50 iterations in both experiments described in this section (see Figures \ref{fig:sigma-runtime} and \ref{fig:edges-runtime}).

\subsection{Image Restoration} 

In this section, we demonstrate a practical application of max-product CCBP by using this algorithm to perform image restoration. The objective of image restoration is to estimate a clean (original) image given a corrupted version of it. A classical approach is to formulate this problem as finding the optimal labelling of a graph with respect to an energy (see, e.g., \cite{blake87}, \cite{felzenszwalb06}, \cite{geman84}). The cost functions used in the energy enforce that the restored image is both piecewise smooth and consistent with the observed image. Once the image restoration model is formulated under this framework, CCBP can be used to restore the corrupted image by obtaining an approximation to the minimum of the energy.

\begin{figure}[ht]
\centering		
	\includegraphics[width=130mm]{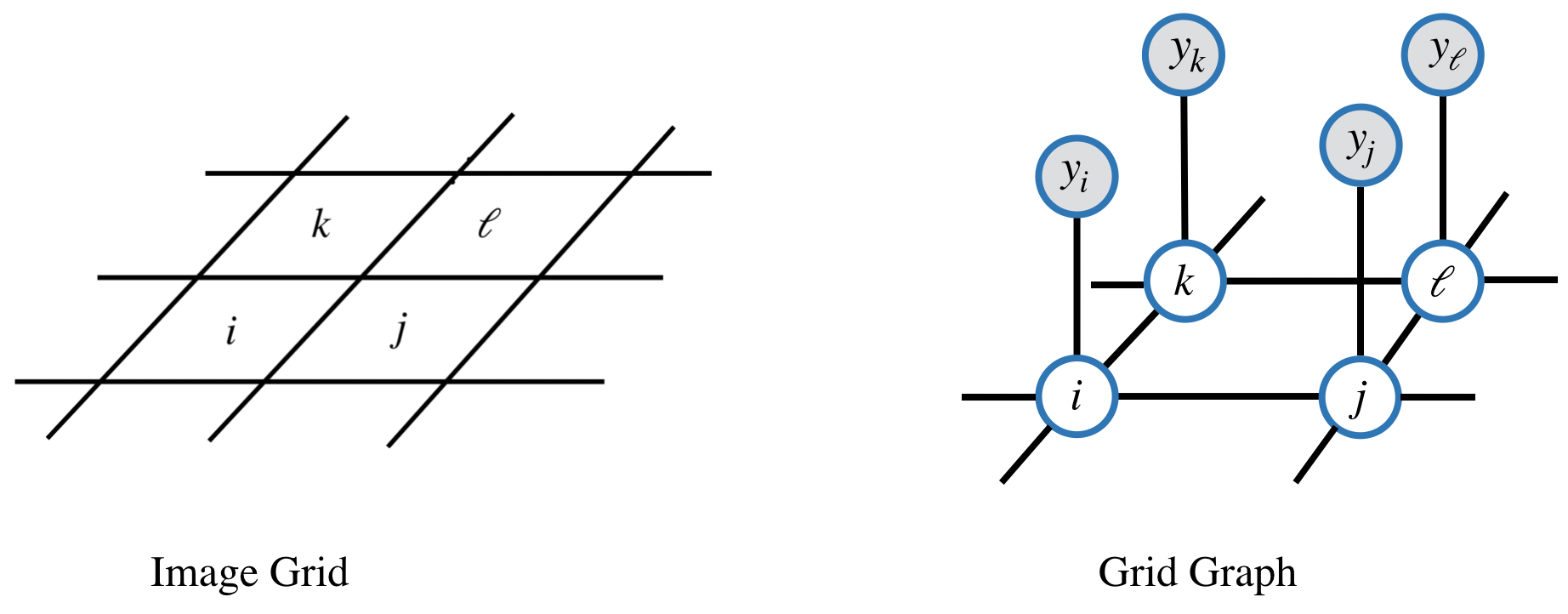}
	\caption{On the left, we see a small region of an image with pixels $i,j,k,\ell$ in the image grid. On the right, we see the corresponding nodes in the grid graph. In addition, each node has an observed pixel value which is depicted by the gray colored node. }
	\label{grid_graph}
\end{figure} 

To formalize this approach, let $\mathcal I$ be an $n\times m$ image and let $G_{n,m}=(V,E)$ be an $n\times m$ grid graph that provides a graphical representation of the image. Each pixel in the image corresponds to a node in this graph and the edges connect neighboring pixels as shown in Figure \ref{grid_graph}. Under the assumption that $\mathcal I$ is an 8-bit image, the label space $\Omega=\{0,\ldots,255\}$ is the set of possible pixel values. Let $x=(x_{1},\ldots,x_{nm})\in\Omega^{nm}$ denote a labelling of the graph and note that a labelling defines an image. Let $y=(y_1\ldots,y_{mn})\in\Omega^{nm}$ denote the observed (corrupted) image. The cost functions used in this application are
\begin{equation*}
g_i(x_i)=\vert x_i-y_i\vert^2\quad\text{and}\quad h_{ij}(x_i,x_j)=\lambda\min\big(\,\vert x_i-x_j\vert^2,\tau\big).
\end{equation*}

The unary cost $g_i$ enforces that pixels in the restored image are consistent with the observation. The pairwise cost $h_{ij}$ enforces that the restored image is piecewise smooth with spatially coherent regions.  The quadratic difference in $h_{ij}$ is bounded by $\tau\in\Omega$ to allow for large differences between neighboring pixels, which occurs when two neighbors belong to different objects. The parameter $\lambda>0$ controls how much weight is placed on the consistency versus smoothing terms in the energy, where large values of $\lambda$ result in smoother images. Under this framework, an approximation to the clean image can be obtained by minimizing the energy given by
\begin{equation*}
E(x)=\sum_{i\in V}\vert x_i-y_i\vert^2+\lambda\sum_{\{i,j\}\in E}\min\big(\,\vert x_i-x_j\vert^2,\tau\big). 
\end{equation*}

For our experiments we used the RGB image shown on the left hand side of Figure \ref{clean_corrupt}. This image is composed of three color channels, where each channel is an 8-bit image with dimensions $400\times 466$.  We restore the RGB image by separately restoring each of the three
color channels.

We generated a corrupted version of the image by adding independent noise to each pixel, by sampling from a Gaussian distribution with mean $\mu=0$ and standard deviation $\sigma=50$. In Figure \ref{clean_corrupt} we show the original image on the left and a corrupted version of this image on the right. The weights used in the operator $S$ were set uniformly and we set $\gamma=0.99$. The parameters in our image model were set as $\tau=100$ and we tried several values of $\lambda$.  We applied CCBP to restore each channel of the corrupted image and the algorithm converged after at most 8 iterations. 

\begin{figure}
\centering		
	\includegraphics[width=140mm]{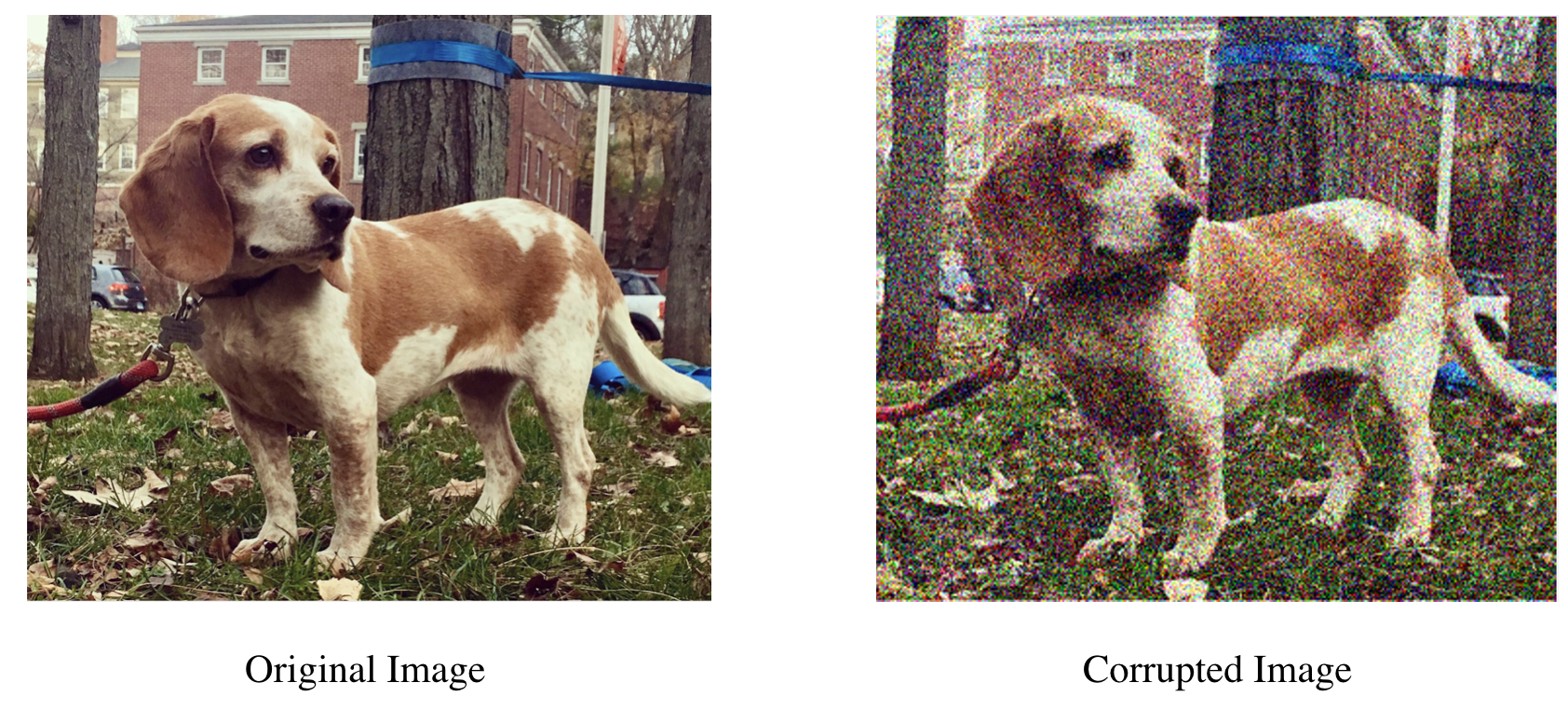}
	\caption{Data used in the image restoration experiment.}
	\label{clean_corrupt}
\end{figure} 

A crucial component of this model is choosing a good value for the parameter $\lambda$.  We obtained the best results with $\lambda=3$ as shown in Figure \ref{beta3}. The restored image is smooth almost everywhere.  Importantly the result also preserves sharp discontinuities at the boundary of different objects .

\begin{figure}
\centering		
	\includegraphics[width=140mm]{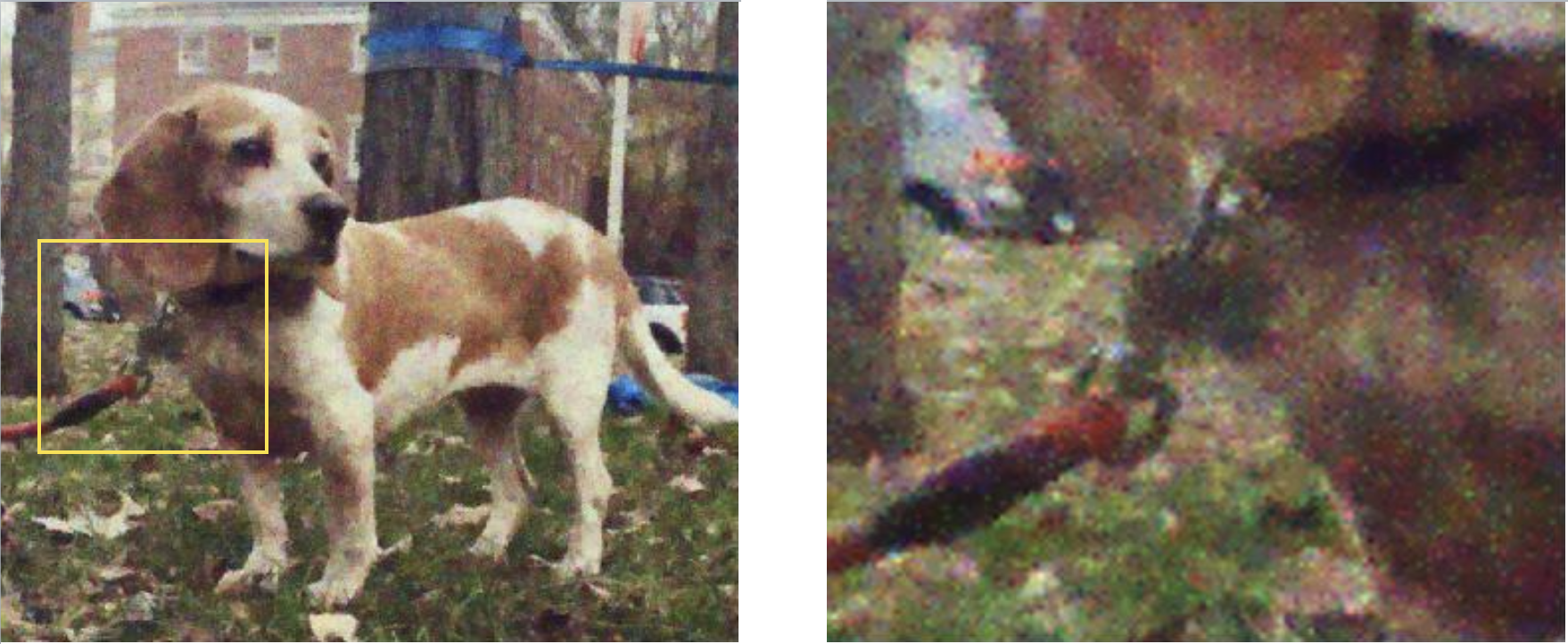}
	\caption{Image restoration using CCBP with $\lambda=4$.}
	\label{beta3}
\end{figure}


\section{Discussion}\label{sec:discussion}

The purpose of this paper is to introduce an alternative to traditional belief propagation that we refer to as convex combination belief propagation. The primary advantage of this new method is that it converges to unique fixed points on graphs with arbitrary topology, independent of the initialization of messages. We provided a characterization of the beliefs obtained from the min-sum version of the algorithm in the case when the graph is acyclic. In addition, we demonstrated the practical application of this algorithm for image restoration.

Although CCBP has good theoretical properties, one drawback of the method is that the resulting beliefs may not provide a good approximation of the exact min-marginals (or marginal distributions) in the case of the min-sum (or sum-product) algorithm. For example, traditional BP is guaranteed to compute the exact min-marginals (or marginals) when the graph is acyclic. However, we showed the beliefs obtained with CCBP are the min-marginals of a weighted energy when the graph is acyclic. Although the beliefs may differ from the exact min-marginals of the unweighted energy, they still provide useful information. To illustrate this point, we provide an example of using CCBP to obtain a fixed point in the spin glass model from statistical physics.

\begin{exmp}\label{ex:accuracy_ms}
Let $G=(V,E)$ be a complete graph with twelve nodes and let $\Omega =\{-1,1\}$ be a set of possible outcomes for each random variable. Consider the energy given by
\begin{equation*}
E(x)=\sum_{i\in V}y_i x_i +\sum_{\{i,j\}\in E}\lambda_{ij}x_i x_j
\end{equation*} 
Each $y_i$ was independently sampled from a unifor distribution over $\Omega$ and each $\lambda_{ij}$ was independently sampled from a standard normal distribution. 

Let $\gamma=0.9$ and $w_{ki} = 1/(d(i)-1)$ in CCBP. Let $\alpha=0.9$ be the damping factor in traditional BP.  We use the initialization $\mu^{(0)}_{ij}=(1,1)$.  For each algorithm, we computed the resulting beliefs along with the exact min-marginals of the energy. In Figure \ref{fig:accuracy_ms}, we show the value of these functions when $x_i=1$. (Note: each function was shifted to have mean zero.)
\end{exmp}

\begin{figure}[ht]
\centering		
	\includegraphics[width=150mm]{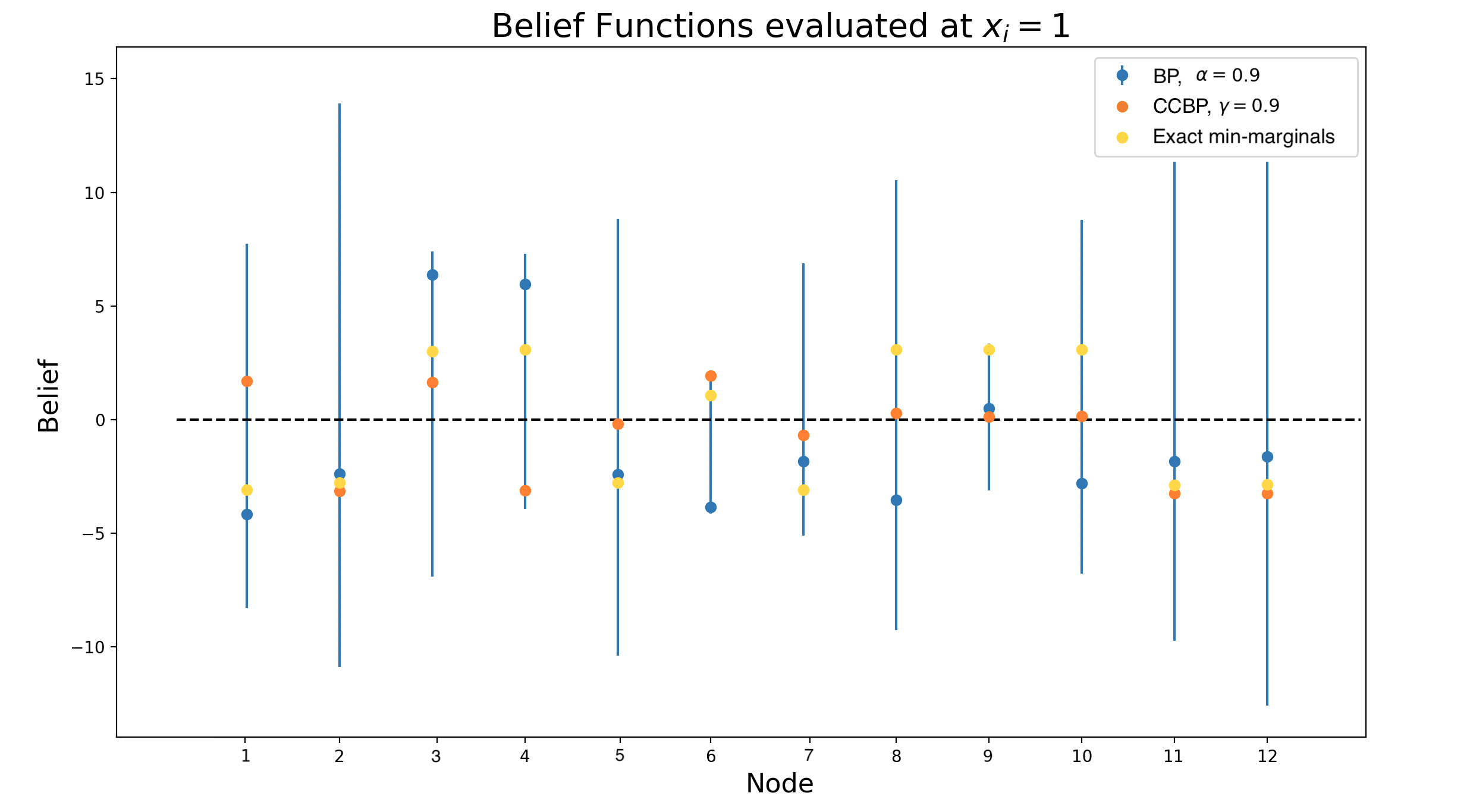}
	\caption{Beliefs obtained from min-sum algorithms and exact min-marginals of the energy from Example \ref{ex:accuracy_ms}. BP failed to converge and the vertical lines show the range of oscillation in the last 100 iterations before stopping the algorithm after 1000 iterations. CCBP converged as expected, but the resulting beliefs differ from the exact min-marginals.}
	\label{fig:accuracy_ms}
\end{figure} 

As seen in Figure~\ref{fig:accuracy_ms} the beliefs obtained with traditional BP oscillate despite the damping factor being quite large. In contrast, we see that the beliefs obtained with CCBP converge. We have seen CCBP obtains good approximations to min-marginals across many problem instances, but there is no theoretical bound on the quality of this approximation. This is an issue that is shared by both convex combination and damped belief propagation (when damped BP converges). 

In damped BP, the message passing operator can have multiple fixed points. Some fixed points may be attractive while others may be repulsive which makes the dynamics of fixed point iteration unstable. CCBP is always guaranteed to converge. Although the resulting beliefs may differ from the exact min-marginals, the beliefs provide useful information to obtain a good configuration. 

Consider the problem of obtaining an optimal (minimum energy) configuration in Example \ref{ex:accuracy_ms}. The beliefs and min-marginals in Figure \ref{fig:accuracy_ms} were centered to have mean zero. Since the state space consists of exactly two elements each belief is positive for one label and negative for the other. In this framework, the value 0 functions as a decision boundary where favorable states correspond to a negative belief.

The beliefs obtained with damped BP in Example \ref{ex:accuracy_ms} are problematic because they oscillate. In some special cases of oscillations, the beliefs can be useful as long as they choose the same state for the entire period of the oscillation. However, the beliefs in Example \ref{ex:accuracy_ms} do not fall into this category. Instead the beliefs are uninformative because they oscillate about the decision boundary. Given these circumstances, it is impossible to confidently choose a labelling with these beliefs. 

Although the beliefs obtained with CCBP differ from the exact min-marginals, they agree on the optimal state for all but one node (see Figure \ref{fig:accuracy_ms}). This highlights a strength of this algorithm, namely that it converges on difficult inference tasks and the beliefs tend to align with the exact min-marginals. Thus, one conclusion to draw from this example is that CCBP provides useful information in decision-based applications. 

We conclude by noting that CCBP is not only guaranteed to converge but also converges quickly. As discussed above the resulting beliefs provide useful information in the context of decision making, a central component of an intelligence system. Many applications require powerful algorithms that can handle complex and large scale data sets. CCBP is a natural fit for these applications because it is designed to converge on the most challenging problems. 

\section*{Acknowledgement}

We would like to thank the reviewer who provided very helpful comments that improved the overall structure and content in this work. 

\bibliographystyle{plain}
\bibliography{manuscript}

\appendix
\section{Message Passing in a Metric Space}

In this appendix we prove the space $\mathcal M$ is complete with respect to the metric $d$ from Proposition \ref{complete}. In the next lemma, we define a distance function over the set of positive reals and prove that this pair is a complete metric space. Then we extend this distance function into a metric that is defined over the product space $\mathcal M$.

\begin{lemma}\label{simple_metric}
Let $f:\mathbb R_+\times\mathbb R_+\rightarrow\mathbb R$ be the distance function given by 
\begin{equation*}
f(x_i,x_j)=\vert\log\,x_i-\log\, x_j\,\vert
\end{equation*}
for any $x_i,x_j\in\mathbb R_+$, then the pair $(\mathbb R_+,f)$ is a complete metric space.
\end{lemma}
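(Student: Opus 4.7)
The plan is to observe that $f$ is simply the pullback of the standard Euclidean metric on $\mathbb{R}$ through the bijection $\log : \mathbb{R}_+ \to \mathbb{R}$ (with inverse $\exp$). This makes $(\mathbb{R}_+, f)$ isometric to $(\mathbb{R}, |\cdot|)$, so both the metric axioms and completeness transfer essentially for free from the classical fact that $(\mathbb{R}, |\cdot|)$ is a complete metric space.

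First I would verify the four metric axioms. Non-negativity and symmetry are immediate from the absolute value. Identity of indiscernibles follows from injectivity of $\log$ on $\mathbb{R}_+$: if $f(x_i, x_j) = 0$ then $\log x_i = \log x_j$, hence $x_i = x_j$. The triangle inequality is inherited directly: for any $x_i, x_j, x_k \in \mathbb{R}_+$,
\begin{equation*}
f(x_i, x_k) = |\log x_i - \log x_k| \leq |\log x_i - \log x_j| + |\log x_j - \log x_k| = f(x_i, x_j) + f(x_j, x_k).
\end{equation*}

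Next I would handle completeness by transporting a Cauchy sequence through the logarithm. Given a Cauchy sequence $\{x_n\} \subseteq \mathbb{R}_+$ with respect to $f$, the sequence $\{\log x_n\} \subseteq \mathbb{R}$ is Cauchy with respect to the standard metric, since by definition $|\log x_n - \log x_m| = f(x_n, x_m)$. By completeness of $(\mathbb{R}, |\cdot|)$ there exists $y \in \mathbb{R}$ with $\log x_n \to y$. Setting $x^\star := e^y$, I then have $x^\star \in \mathbb{R}_+$ (since the exponential is strictly positive) and
\begin{equation*}
f(x_n, x^\star) = |\log x_n - \log x^\star| = |\log x_n - y| \longrightarrow 0,
\end{equation*}
which shows $x_n \to x^\star$ in $(\mathbb{R}_+, f)$.

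The only point that requires any care is the final step: one has to make sure the limit $x^\star$ lies inside $\mathbb{R}_+$ rather than on the boundary. This is precisely where the choice of metric pays off — unlike the Euclidean metric restricted to $\mathbb{R}_+$, the logarithmic metric blows up near $0$, so Cauchy sequences cannot accumulate there, and $x^\star = e^y > 0$ is automatic. No other obstacles are expected, as the entire argument is just the observation that $\log$ is an isometry from $(\mathbb{R}_+, f)$ onto the complete space $(\mathbb{R}, |\cdot|)$.
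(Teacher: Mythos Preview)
Your proof is correct and follows essentially the same approach as the paper: verify the metric axioms directly from properties of the absolute value and injectivity of $\log$, then establish completeness by pushing a Cauchy sequence through $\log$ to $\mathbb{R}$, invoking completeness of $(\mathbb{R},|\cdot|)$, and pulling the limit back via $\exp$. Your framing in terms of $\log$ being an isometry onto $(\mathbb{R},|\cdot|)$ is a slightly cleaner conceptual packaging of the same argument.
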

\begin{proof}
It is clear that $f$ is non-negative, symmetric, and that $d(x_i,x_j)=0$ if and only if $x_i=x_j$. The triangle inequality holds for any $x_i,x_j,x_k\in\mathbb R_+$ by
\begin{align*}
    f(x_i,x_j)&=\vert\log\,x_i-\log\, x_j\vert\\
              &\leq\vert\log\,x_i-\log\, x_k\vert+\vert\log\,x_k-\log\, x_j\vert\\
              &=f(x_i,x_k)+f(x_k,x_j).
\end{align*}
To show that this space is complete, choose any Cauchy sequence $\{x_n\}\subset\mathbb{R}_+$ and note that this sequence can be written as $\{x_n\}=\{e^{y_n}\}$ with $y_n=\log\,x_n$. The sequence $\{y_n\}\subset\mathbb R$ must be Cauchy with respect to the Euclidean metric because for any $\epsilon>0$ there exists an $N>0$ such that $f(x_n,x_m)<\epsilon$ for all $n,m>N$, which implies that
\begin{equation*}
\vert y_n-y_m\vert=\vert\log\,x_n-\log\,x_m\vert=f(x_n,x_m)<\epsilon.
\end{equation*}
Since $\{y_n\}$ is a Cauchy sequence in a complete space, there exists some $y\in\mathbb{R}$ such that $y_n\rightarrow y$ and hence $x_n=e^{y_n} \rightarrow e^y$. 
\end{proof}

\setcounter{prop}{0}
\begin{prop}
Let $d:\mathcal M\times\mathcal M\rightarrow\mathbb R$ be the distance function given by
\begin{equation*}
d(\mu,\nu)=\max_{i\in V}\max_{j\in N(i)}\max_{x_j}\big\vert\log\,\mu_{ij}(x_j)-\log\,\nu_{ij}(x_j)\big\vert,
\end{equation*}
then the pair $\big(\mathcal M,d\big)$ is a complete metric space. 
\end{prop}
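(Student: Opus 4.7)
The plan is to reduce the proposition to the elementary completeness statement already established in Lemma~\ref{simple_metric} by exploiting the fact that $d$ is the $\ell^\infty$ product metric induced by $f$ over a \emph{finite} index set. Concretely, write $I=\{(i,j,x_j):i\in V,\ j\in N(i),\ x_j\in\Omega\}$, and observe that the evaluation map $\mathcal M \to \mathbb R_+^{|I|}$ sending $\mu\mapsto(\mu_{ij}(x_j))_{(i,j,x_j)\in I}$ is a bijection, under which $d$ becomes the maximum of the $f$-distances across the coordinates indexed by $I$.

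First I would verify that $d$ is a metric. Non-negativity, symmetry, and the identity of indiscernibles are immediate since these properties hold pointwise by Lemma~\ref{simple_metric} and are preserved under taking a finite maximum. For the triangle inequality, for any $\mu,\nu,\xi\in\mathcal M$ and any coordinate $(i,j,x_j)$ the triangle inequality for $f$ gives
\begin{equation*}
f(\mu_{ij}(x_j),\nu_{ij}(x_j)) \leq f(\mu_{ij}(x_j),\xi_{ij}(x_j)) + f(\xi_{ij}(x_j),\nu_{ij}(x_j)) \leq d(\mu,\xi)+d(\xi,\nu),
\end{equation*}
and taking the maximum over $(i,j,x_j)$ on the left yields $d(\mu,\nu)\leq d(\mu,\xi)+d(\xi,\nu)$.

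For completeness, let $\{\mu^{(n)}\}\subset\mathcal M$ be Cauchy with respect to $d$. For each fixed coordinate $(i,j,x_j)\in I$, since
\begin{equation*}
f\bigl(\mu^{(n)}_{ij}(x_j),\mu^{(m)}_{ij}(x_j)\bigr)\leq d(\mu^{(n)},\mu^{(m)}),
\end{equation*}
the scalar sequence $\{\mu^{(n)}_{ij}(x_j)\}_n$ is Cauchy in $(\mathbb R_+,f)$. By Lemma~\ref{simple_metric}, it converges to some limit $\mu^\star_{ij}(x_j)\in\mathbb R_+$. Assembling these coordinate-wise limits defines an element $\mu^\star\in\mathcal M$.

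Finally I would upgrade this coordinate-wise convergence to convergence in $d$. Given $\epsilon>0$, since $I$ is finite, for each $(i,j,x_j)\in I$ choose $N_{ij,x_j}$ so that $f(\mu^{(n)}_{ij}(x_j),\mu^\star_{ij}(x_j))<\epsilon$ for all $n\geq N_{ij,x_j}$, and let $N=\max_{(i,j,x_j)\in I} N_{ij,x_j}$. Then for $n\geq N$ we have $d(\mu^{(n)},\mu^\star)<\epsilon$, so $\mu^{(n)}\to\mu^\star$ in $\mathcal M$. I do not anticipate any serious obstacles here; the only conceptual point worth emphasizing is that the finiteness of $V$, of each $N(i)$, and of $\Omega$ is what allows the maximum over coordinates to preserve both the Cauchy property and the metric axioms.
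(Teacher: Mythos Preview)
Your proposal is correct and follows essentially the same route as the paper: both reduce the metric axioms and completeness to the coordinate-wise statement in Lemma~\ref{simple_metric}, then pass to the maximum over the finite index set. Your explicit invocation of finiteness when upgrading coordinate-wise convergence to convergence in $d$ is slightly more careful than the paper's one-line $d(\mu^{(n)},\mu)\to 0$, but the underlying argument is identical.
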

\begin{proof}
It is clear that $d$ is non-negative, symmetric, and that $d(\mu,\nu)=0$ if and only if $\mu=\nu$. To show the triangle inequality, choose any $\mu,\nu,\lambda\in\mathcal M$ and observe that
\begin{align*}
d(\mu,\nu)&=\max_{i\in V}\max_{j\in N(i)}\max_{x_j}\big\vert\log\,\mu_{ij}(x_j)-\log\,\nu_{ij}(x_j)\big\vert\\
&\leq\max_{i\in V}\max_{j\in N(i)}\max_{x_j}\Big(\,\big\vert\log\,\mu_{ij}(x_j)-\log\,\lambda_{ij}(x_j)\vert+\vert\log\,\lambda_{ij}(x_j)-\log\,\nu_{ij}(x_j)\big\vert\,\Big)\\
&\leq\max_{i\in V}\max_{j\in N(i)}\max_{x_j}\big\vert\log\,\mu_{ij}(x_j)-\log\,\lambda_{ij}(x_j)\vert+\max_{i\in V}\max_{j\in N(i)}\max_{x_j}\vert\log\,\lambda_{ij}(x_j)-\log\,\nu_{ij}(x_j)\big\vert\\
&=d(\mu,\lambda)+d(\lambda,\nu).
\end{align*}
Now choose any Cauchy sequence $\{\mu^{(n)}\}\subset\mathcal M$, then $\{\mu_{ij}^{(n)}(x_j)\}\subset\mathbb{R}_+$ is a Cauchy sequence in $(\mathbb R_+,f)$ because for any $\epsilon>0$ there exists an $N>0$ such that for all $n,m>N$
\begin{align*}
f\big(\mu_{ij}^{(n)}(x_j),\,\mu_{ij}^{(m)}(x_j)\big)&=\big\vert \log\,\mu_{ij}^{(n)}(x_j)-\log\,\mu_{ij}^{(m)}(x_j)\big\vert \\ 
&\leq\max_{i\in V}\max_{j\in N(i)}\max_{x_j}\,\big\vert\log\,\mu_{ij}^{(n)}(x_j)-\log\,\mu_{ij}^{(m)}(x_j)\big\vert\\
&=d\big(\mu^{(n)},\mu^{(m)}\big)\\
&<\epsilon
\end{align*}
Given that the pair $(\mathbb{R}_+,f)$ is a complete metric space by Lemma \ref{simple_metric}, there exists some $\mu$ such that $\mu^{(n)}_{ij}(x_j)\rightarrow \mu_{ij}(x_j)$.  Thus, the space $(\mathcal M,d)$ is complete because $\mu^{(n)}\rightarrow \mu\in\mathcal M$ by 
\begin{equation*}
    d(\mu^{(n)},\mu)=\max_{i\in V}\max_{j\in N(i)}\max_{x_j} f\big(\mu^{(n)}_{ij}(x_j),\mu_{ij}(x_j)\big)\rightarrow0.
\end{equation*}
\end{proof}

\section{Sum-Product Algorithm}

In this appendix, we present the the sum-product version of CCBP. The operator in the sum-product algorithm is analogous to the one derived from the max-product equations.

\begin{defn}
The operator $S:\mathcal{M}\rightarrow\mathcal{M}$ in the CCBP sum-product algorithm is
\begin{equation*}
\big(S\mu\big)_{ij}(x_j)=\sum_{x_i}\phi_i(x_i)\psi_{ij}(x_i,x_j)\Big(\prod_{k\in N(i)\backslash j}\mu_{ki}(x_i)^{w_{ki}}\Big)^\gamma ,
\end{equation*}
where the weights must be non-negative with $\sum\limits_{k\in N(i)\backslash j}w_{ki}\leq1$ and $\gamma\in(0,1)$ 
\end{defn}

Similar to the case of the max-product operator we show the sum-product CCBP operator is contractive.

\begin{lemma}\label{sp_contraction}
The operator $S$ is contractive with Lipschitz constant $\gamma$.
\end{lemma}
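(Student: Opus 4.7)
The plan is to mirror the strategy used for the max-product contraction (Lemma \ref{contraction}), but to replace the step that exploits $\max(fg)\leq(\max f)(\max g)$ with an argument that bounds a ratio of sums by the maximum of the pointwise ratios. The key observation is that for any positive sequences $a_{x_i},b_{x_i}$, if $a_{x_i}\leq M\,b_{x_i}$ for every $x_i$, then $\sum_{x_i}a_{x_i}\leq M\sum_{x_i}b_{x_i}$ and hence the same upper bound $M$ holds for the ratio of sums.

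First I would write, for fixed $\mu,\nu\in\mathcal M$ and fixed indices $i,j,x_j$,
\begin{equation*}
(S\mu)_{ij}(x_j)=\sum_{x_i}\phi_i(x_i)\,\psi_{ij}(x_i,x_j)\prod_{k\in N(i)\backslash j}\nu_{ki}(x_i)^{\gamma w_{ki}}\cdot\prod_{k\in N(i)\backslash j}\frac{\mu_{ki}(x_i)^{\gamma w_{ki}}}{\nu_{ki}(x_i)^{\gamma w_{ki}}}.
\end{equation*}
Taking the logarithm of the inner product of ratios and using $\sum_{k\in N(i)\backslash j}w_{ki}\leq 1$ together with the definition of $d$, I obtain the pointwise bound
\begin{equation*}
\prod_{k\in N(i)\backslash j}\frac{\mu_{ki}(x_i)^{\gamma w_{ki}}}{\nu_{ki}(x_i)^{\gamma w_{ki}}}\leq \exp\!\Big(\gamma\sum_{k\in N(i)\backslash j}w_{ki}\,|\log\mu_{ki}(x_i)-\log\nu_{ki}(x_i)|\Big)\leq e^{\gamma\,d(\mu,\nu)},
\end{equation*}
valid for every $x_i$. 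Applying the elementary ratio-of-sums observation to the numerator and denominator then yields $(S\mu)_{ij}(x_j)\leq e^{\gamma d(\mu,\nu)}\,(S\nu)_{ij}(x_j)$, and by swapping the roles of $\mu$ and $\nu$ I get the matching lower bound.

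Both inequalities combine into
\begin{equation*}
\big|\log (S\mu)_{ij}(x_j)-\log(S\nu)_{ij}(x_j)\big|\leq \gamma\,d(\mu,\nu),
\end{equation*}
after which I take the maximum over $x_j$, $j\in N(i)$, and $i\in V$ to conclude $d(S\mu,S\nu)\leq\gamma\,d(\mu,\nu)$. The main obstacle compared with the max-product case is justifying that one may pass from a pointwise bound on the integrand to a bound on the sum; this is precisely where the ratio-of-positive-sums fact plays the role that the sub-multiplicativity of $\max$ played before. Everything else—positivity of potentials, the constraint $\sum w_{ki}\leq 1$, and the range $\gamma\in(0,1)$—is used in exactly the same way as in Lemma \ref{contraction}, so no further machinery beyond the completeness result in Proposition \ref{complete} is required, and Banach's fixed point theorem will again yield a global convergence theorem for the sum-product CCBP operator.
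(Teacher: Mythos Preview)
Your proposal is correct and follows essentially the same route as the paper: both proofs rewrite $(S\mu)_{ij}(x_j)$ by inserting $\nu$, then use the elementary fact that $\sum_{x_i}a_{x_i}r_{x_i}\leq\big(\max_{x_i}r_{x_i}\big)\sum_{x_i}a_{x_i}$ for positive terms (your ``ratio-of-sums'' observation is exactly this), take logarithms, exploit $\sum_k w_{ki}\leq 1$, and invoke symmetry in $\mu,\nu$. The only cosmetic difference is that you bound the ratio product by $e^{\gamma d(\mu,\nu)}$ immediately, whereas the paper first pulls out $\max_{x_i}\prod_k(\mu_{ki}/\nu_{ki})^{\gamma w_{ki}}$ and only later reduces it to $\gamma\,d(\mu,\nu)$ via the weight constraint; the logical content is identical.
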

\begin{proof}
Choose any $\mu,\nu\in\mathcal M$, then 
\begin{align*}
\big(Sm\big)_{ij}(x_j)&=\sum_{x_i} \phi_i(x_i)\psi_{ij}(x_i,x_j)\Big(\prod_{k\in N(i)\backslash j}\mu_{ki}(x_i)^{w_{ki}}\Big)^\gamma \\
&=\sum_{x_i} \phi_i(x_i)\psi_{ij}(x_i,x_j)\prod_{k\in N(i)\backslash j}\nu_{ki}(x_i)^{\gamma w_{ki}}\prod_{k\in N(i)\backslash j}\frac{\mu_{ki}(x_i)^{\gamma w_{ki}}}{ \nu_{ki}(x_i)^{\gamma w_{ki}}} \\
&\leq\bigg(\sum_{x_i} \phi_i(x_i)\psi_{ij}(x_i,x_j)\prod_{k\in N(i)\backslash j}\nu_{ki}(x_i)^{\gamma w_{ki}} \bigg)\bigg(\max_{x_i}\;\prod_{k\in N(i)\backslash j}\,\frac{\mu_{ki}(x_i)^{\gamma w_{ki}}}{ \nu_{ki}(x_i)^{\gamma w_{ki}}}\,\bigg)\\
&=\big(S\nu\big)_{ij}(x_j)\;\max_{x_i}\,\prod_{k\in N(i)\backslash j}\frac{\mu_{ki}(x_i)^{\gamma w_{ki}}}{ \nu_{ki}(x_i)^{\gamma w_{ki}}}.
\end{align*}
Taking the logarithm of both sides yields that
\begin{align*}
\log\big(S\mu\big)_{ij}(x_j)&\leq\log\bigg(\big(S\nu\big)_{ij}(x_j)\;\max_{x_i}\prod_{k\in N(i)\backslash j}\frac{\mu_{ki}(x_i)^{\gamma w_{ki}}}{ \nu_{ki}(x_i)^{\gamma w_{ki}}}\bigg)\\
&= \log\big(S\nu\big)_{ij}(x_j)+\gamma\,\max_{x_i}\sum_{k\in N(i)\backslash j}w_{ki}\log\,\frac{\mu_{ki}(x_i)}{ \nu_{ki}(x_i)} \\
&\leq\log\big(S\nu\big)_{ij}(x_j)+\gamma\,\max_{x_i}\sum_{k\in N(i)\backslash j}w_{ki}\,\Big\vert\log\,\frac{\mu_{ki}(x_i)}{\nu_{ki}(x_i)} \,\Big\vert
\end{align*}
\begin{equation*}
\implies\log\big(S\mu\big)_{ij}(x_j)-\log\big(S\nu\big)_{ij}(x_j)\leq\gamma\,\max_{x_i}\sum_{k\in N(i)\backslash j}w_{ki}\,\Big\vert\log\,\frac{\mu_{ki}(x_i)}{\nu_{ki}(x_i)}\Big\vert. 
\end{equation*}
Since this inequality holds when $\mu$ and $\nu$ are interchanged, we can take the absolute value of the left hand side. Moreover, given that the above inequality holds for any $x_j\in\Omega$, it must hold for the maximum over $x_j$ the left hand side. 
\begin{align*}
\max_{x_j}\Big\vert\log\,\frac{ (S\mu)_{ij}(x_j)}{(S\nu)_{ij}(x_j)}\,\Big\vert
&\leq\gamma\,\max_{x_i}\sum_{k\in N(i)\backslash j}w_{ki}\,\Big\vert\log\,\frac{\mu_{ki}(x_i)}{\nu_{ki}(x_i)}\,\Big\vert\\ 
&\leq\gamma\,\max_{i\in V}\max_{j\in N(i)}\max_{x_i}\sum_{k\in N(i)\backslash j}w_{ki}\,\Big\vert\log\,\frac{\mu_{ki}(x_i)}{\nu_{ki}(x_i)}\,\Big\vert\\ 
&\leq\gamma\,\max_{i\in V}\max_{j\in N(i)}\max_{k\in N(i)\backslash j}\max_{x_i}\,\Big\vert\log\,\frac{\mu_{ki}(x_i)}{\nu_{ki}(x_i)}\,\Big\vert\\ 
&=\gamma\,\max_{i\in V}\max_{j\in N(i)}\max_{x_i}\,\Big\vert\log\,\frac{\mu_{ij}(x_i)}{\nu_{ij}(x_i)}\,\Big\vert.
\end{align*}
Since the edge $\{i,j\}\in E$ was chosen arbitrarily from the beginning, the inequality holds for any  $\{i,j\}\in E$.  The final result is obtained by taking the maximum of the left hand side over all the edges in the graph.
\begin{equation*}
\max_{i\in V}\max_{j\in N(i)}\max_{x_i}\Big\vert\log\,\frac{ (S\mu)_{ij}(x_j)}{(S\nu)_{ij}(x_j)}\,\Big\vert \leq\gamma\,\max_{i\in V}\max_{j\in N(i)}\max_{x_i}\,\Big\vert\log\,\frac{\mu_{ij}(x_i)}{\nu_{ij}(x_i)}\,\Big\vert
\end{equation*}
\begin{equation*}
    \implies d\big(S\mu, S\nu\big)\leq\gamma\, d(\mu,\nu)
\end{equation*}
\end{proof}  

\begin{thm}
The operator $S$ has a unique fixed point $\mu^\star\in\mathcal{M}$ and the sequence defined by $\mu^{(n+1)}:= S\mu^{(n)}$ converges to $\mu^\star$. Furthermore, after $n$ iterations
\begin{equation*}
d(S^{(n)}\mu^{(0)},\mu^\star)\leq\gamma^n\, d(\mu^{(0)},\mu^\star).
\end{equation*}
\end{thm}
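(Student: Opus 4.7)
The plan is to mirror exactly the argument used for Theorem \ref{convergence} in the max-product case, since by this point in the paper all of the necessary machinery is already in place. The sum-product version of $S$ acts on the same space $\mathcal M$, and the same metric $d$ from Proposition \ref{complete} is used; so the proof reduces to verifying the hypotheses of Banach's fixed point theorem and reading off the geometric decay as a corollary.

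First I would cite Proposition \ref{complete} to assert that $(\mathcal M, d)$ is a complete metric space, noting that although $\mathcal M$ was originally introduced in the context of the max-product operator, the space itself (the product of positive orthants $\mathbb R_+^m$ over all directed edges) and the metric $d$ do not depend on whether the update uses max or sum. Next I would invoke Lemma \ref{sp_contraction}, which was just proved, to conclude that the sum-product CCBP operator $S$ is a contraction on $(\mathcal M, d)$ with Lipschitz constant $\gamma \in (0,1)$. Applying Banach's fixed point theorem then yields the existence of a unique fixed point $\mu^\star \in \mathcal M$, together with the fact that the sequence $\mu^{(n+1)} := S\mu^{(n)}$ converges to $\mu^\star$ starting from any initialization $\mu^{(0)} \in \mathcal M$.

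For the rate estimate I would simply iterate the contraction inequality: since $d(S\mu, S\nu) \leq \gamma\, d(\mu, \nu)$ for all $\mu, \nu \in \mathcal M$, substituting $\mu = S^{(n-1)}\mu^{(0)}$ and $\nu = \mu^\star$ (using $S\mu^\star = \mu^\star$) and iterating $n$ times gives $d(S^{(n)}\mu^{(0)}, \mu^\star) \leq \gamma^n\, d(\mu^{(0)}, \mu^\star)$, which is exactly the claimed bound.

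There is really no main obstacle here, because the substantive work has been done in Lemma \ref{sp_contraction} and Proposition \ref{complete}. The only thing worth noting briefly in the write-up is that the completeness proof in Appendix A is stated in terms of the space $\mathcal M$ and does not rely on the form of the message update, so it applies verbatim to the sum-product setting. The entire theorem can be stated in two or three lines following the template of the max-product Theorem \ref{convergence}.
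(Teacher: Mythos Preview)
Your proposal is correct and matches the paper's own proof essentially verbatim: cite Proposition~\ref{complete} for completeness of $(\mathcal M,d)$, cite Lemma~\ref{sp_contraction} for the contraction property, and apply Banach's fixed point theorem. The only difference is that you spell out the iteration giving the $\gamma^n$ bound, whereas the paper leaves this implicit as part of the Banach statement.
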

\begin{proof}
The pair $(\mathcal M,d)$ is a complete metric space by Proposition \ref{complete} and $S$ is a contraction by Lemma \ref{sp_contraction}.  The result holds by applying Banach's fixed point theorem. 
\end{proof}

\end{document}